\documentclass[twoside]{article}

\usepackage[accepted]{aistats2019}
\usepackage[round]{natbib}

\usepackage{graphicx} %
\usepackage{subfigure} 

\usepackage{amsmath}
\usepackage{mathrsfs}
\usepackage{amsthm}
\usepackage{stmaryrd}
\usepackage{enumitem}

\usepackage{thmtools, thm-restate}

\usepackage{algorithm}

\usepackage{hyperref}

\usepackage{url}            %
\usepackage{booktabs}       %
\usepackage{amsfonts}       %
\usepackage{nicefrac}       %
\usepackage{microtype}      %
\usepackage[noend]{algpseudocode}
\usepackage{caption}
\graphicspath{{images/}}

\newcommand{\cX}{\mathcal{X}}

\newcommand{\cA}{\mathcal{A}}

\newcommand{\cP}{\mathcal{P}}

\newcommand{\cT}{\mathcal{T}}

\newcommand{\bR}{\mathbb{R}}

% Indicator function

% \newcommand{\indic}[1]{\left \llbracket #1 \right \rrbracket}

% Dot product

% Expectation
\DeclareMathOperator*{\expect}{{\huge \mathbb{E}}}
\newcommand{\expects}{\expect\nolimits}

% Norms
\newcommand{\norm}[1]{\left \| #1 \right \|}

\newtheorem{defn}{Definition}

\newtheorem{cor}{Corollary}

% AIXI shorthand 

% Misc.

\newcommand{\cbar}{\, | \,}
\newcommand{\cdbar}{\, \| \,}

\def \grad {\nabla}

\DeclareMathOperator*{\argmin}{arg\,min}

\newcommand{\eqnref}[1]{(\ref{eqn:#1})}

\def \ld {l}
\def \p {\mathbf{p}}
\def \q {\mathbf{q}}
\def \z {\mathbf{z}}
\def \P {\mathbf{P}}
\def \Q {\mathbf{Q}}
\def \bP {\mathbb{P}}

\def \cTpi {{\cT^{\pi}}}
\def \Ppi {\Pr\nolimits_\pi}  %

\def \R {\mathbb{R}}
\def \PPi {\boldsymbol{\Pi}}
\def \lPPi {\boldsymbol{\Pi}_{\xi, \lambda, \Phi}}
\def \nPPi {{\hat {\boldsymbol{\Pi}}_{\xi, \lambda, \Phi}}}
\def \cX {\mathcal{X}}
\def \cD {\mathcal{D}}
\def \cP {\mathcal{P}}
\def \bd {\bar d}
\def \PiC {\Pi_C}  %

\def \Aat {A A^\top}  %
\def \eet {e e^\top}  %

\def \ldt {\ld_{\lambda}^2}
\def \ldl {\ld_{\lambda}^2}
\def \hldl {{\hat l}_{\lambda}^2}
\def \lxi {\ld_{\xi, \lambda}^2}
\def \llxi {\ld_{\xi, {\lambda}}^2}
\def \hllxi {{\hat \ld}_{\xi, {\lambda}}^2}
\def \ll {\boldsymbol{\ld}}

\def \Deq {\overset{D}{=}}

\def \cramer {Cram\'er }
\graphicspath{{images/}}

\begin{document}

\twocolumn[

\aistatstitle{Distributional reinforcement learning with linear function approximation}

\aistatsauthor{ Marc G. Bellemare \And Nicolas Le Roux \And Pablo Samuel Castro \And Subhodeep Moitra}
\aistatsaddress{Google Brain}

]

\begin{abstract}
Despite many algorithmic advances, our theoretical understanding of practical distributional reinforcement learning methods remains limited.
One exception is Rowland et al. (2018)'s analysis of the C51 algorithm in terms of the Cram\'er distance, but their results only apply to the tabular setting and ignore C51's use of a softmax to produce normalized distributions.
In this paper we adapt the Cram\'er distance to deal with arbitrary vectors. 
From it we derive a new distributional algorithm which is fully Cram\'er-based and can be combined to linear function approximation, with formal guarantees in the context of policy evaluation.
In allowing the model's prediction to be any real vector, we lose the probabilistic interpretation behind the method, but otherwise maintain the appealing properties of distributional approaches.
To the best of our knowledge, ours is the first proof of convergence of a distributional algorithm combined with function approximation. Perhaps surprisingly, our results provide evidence that Cram\'er-based distributional methods may perform worse than directly approximating the value function.
\end{abstract}

\section{Introduction}
In reinforcement learning one often seeks to predict the expected sum of discounted rewards, also called return or \emph{value}, of a given state. The distributional perspective on reinforcement learning takes this idea further by suggesting that we should predict the full distribution of this random return, called \emph{value distribution} \citep{bellemare2017distributional}. This has produced state-of-the-art performance on a number of deep reinforcement learning benchmarks \citep[e.g.][]{hessel18rainbow,barthmaron18distributional,dabney18implicit,dabney18distributional}.

The original distributional algorithm from this line of work is Bellemare et al.'s C51 algorithm.
Core to C51 are 1) the use of a softmax transfer function to represent the value distribution, 2) a heuristic projection step, and finally 3) the minimization of a Kullback-Leibler (KL) loss. \citet{rowland2018} showed that the heuristic projection minimizes a probability metric called the Cram\'er distance. However, their work did not explain the role of the KL loss in the algorithm.

The combination of two losses (Cram\'er and KL) is less than ideal, and makes the learning process technically more challenging to implement than, e.g., the classic Q-Learning algorithm \citep{watkins89learning}. This combination also makes it difficult to provide theoretical guarantees, both in terms of convergence but also in the quality of the value distribution generated by an approximate learner. %

A natural question is whether it is possible to do away with the softmax and KL loss, and derive a ``100\% Cram\'er'' algorithm, both for simplicity and theoretical understanding. In this paper we seek an algorithm which directly minimizes the Cram\'er distance between the output of the model, for example a deep network, and a target distribution. 
As it turns out, we can construct such an algorithm by treating the model outputs as an improper probability distribution, and deriving a variant of the Cram\'er distance which gracefully handles such distributions.

This new algorithm enables us to derive theoretical guarantees on the behaviour of a distributional algorithm when combined to linear function approximation, in the policy evaluation setting. Although convergence is guaranteed under the usual conditions, our performance bound is worse than that of an algorithm which only approximates the value function. This suggests that predicting the full distribution as an intermediate step in estimating the expected value could hurt performance. As a whole, our results suggest that the good performance of C51 cannot solely be attributed to a better-behaved loss function.

\section{Background}
\label{sec:background}

We consider an agent acting in an environment described by a finite Markov Decision Process $\left \langle \cX, \cA, \Pr, R, \gamma \right \rangle$ \citep{puterman94markov}. In this paper we study the policy evaluation setting, in which we assume a fixed policy $\pi$ mapping states to distributions over actions and consider the resulting state to state transition function $\Pr_\pi$:
\begin{equation*}
\Ppi(x' \cbar x) := \sum_{a \in \cA} \pi(a \cbar x) \Pr(x' \cbar x, a) .
\end{equation*}
We view the reward function $R$ as a collection of random variables describing the bounded, random reward received when an agent exits a state $x \in \cX$. The value distribution \citep{bellemare2017distributional} describes the random \emph{return}, or sum of discounted rewards, received when beginning in state $x$:
\begin{equation*}
Z^\pi(x) := \sum_{t=0}^\infty \gamma^t R(X_t)\quad X_0 = x, X_{t+1} \sim \Ppi(\cdot \cbar X_{t}) .
\end{equation*}
The expectation of the value distribution corresponds to the familiar \emph{value function} $V^\pi(x)$ \citep{sutton98reinforcement}. Similar to the value function satisfying the Bellman equation, $Z^\pi$ satisfies the distributional Bellman equation with an equality in distribution:
\begin{equation*}
Z^\pi(x) \Deq R(x) + \gamma Z^\pi(X') \qquad X' \sim \Ppi(\cdot \cbar x),
\end{equation*}
The distributional Bellman operator $\cTpi$ over value distributions is defined as
\begin{equation}
\cTpi Z(x) \overset{D}{:=} R(x) + \gamma \Ppi Z(x),\label{eqn:distributional_bellman_operator}
\end{equation}
where with some abuse of notation we write $\Ppi Z(x) := Z(X'), X' \sim \Ppi(\cdot \cbar x)$. The operator $\cTpi$ is a contraction mapping in the following sense:
let $d$ be a metric between probability distributions on $\bR$, and for two random variables $U, V$ denote by $d(U, V)$ the application of $d$ to their distributions. We define the maximal metric $\bd$ between two value distributions $Z_1$, $Z_2$ as
\begin{equation*}
\bd(Z_1, Z_2) := \sup_{x \in \cX} d(Z_1(x), Z_2(x)) .
\end{equation*}
Now, we say that $d$ is 1) \emph{sum invariant} if $d(A + U, A + V) \le d(U, V)$ for any random variable $A$ independent of $U$ and $V$, and 2) \emph{scale sensitive} of order $\beta$ if for all $c \in \bR$, $d(cU, cV) \le c^\beta d(U, V)$ \citep{bellemare2017cramer}. For any metric $d$ which satisfies both of these conditions (with $\beta > 0$), then $\cTpi$ is a contraction mapping with modulus $\gamma^\beta$ in the maximal metric $\bd$:
\begin{equation*}
\bd(\cTpi Z_1, \cTpi Z_2) \le \gamma^\beta \bd(Z_1, Z_2) .
\end{equation*}
Under mild assumptions and as a consequence of Banach's fixed point theorem, the process $Z_{k+1} := \cTpi Z_k$ converges to $Z^\pi$ in $\bd$.

\subsection{Metrics Over Distributions}

Let $\p$ and $\q$ be two probability distributions. The Kullback-Leibler (KL) divergence of $\q$ from $\p$ is
\begin{equation*}
D_{KL}(\p, \q) = \int_{-\infty}^\infty \p(t) \log \frac{\p(t)}{\q(t)} \; dt.
\end{equation*}
Note that the KL divergence is not properly a metric, but does define a loss function.
However, the KL divergence is not scale sensitive. Furthermore, it is infinite whenever $\p$ is not absolutely continuous with respect to $\q$, which can be problematic when designing a distributional algorithm with finite support: applying the Bellman operator to a discrete random variable typically changes its support.

The KL divergence is generally used in conjunction with a softmax transfer function which guarantees that $\q$ has unit mass; without this constraint, the minimizer of $D_{KL}$ may not be $\q = \p$. Furthermore, the KL divergence corresponds to the matching loss for the softmax function, guaranteeing that the resulting optimization is convex \citep[with respect to the softmax weights;][]{auer95exponentially}.

Unlike the KL divergence, the Cram\'er distance \citep{szekely02estatistics} is a proper distance between probability distributions. Given two distributions $\p$ and $\q$ over $\mathbb{R}$ with cumulative distribution functions $F_{\p}$ and $F_{\q}$, the Cram\'er distance is defined as
\begin{align}\label{eqn:cramer_distance}
D_C(\p, \q) &= \int_{-\infty}^{+\infty} (F_{\p}(t) - F_{\q}(t))^2 \; dt .
\end{align}
For the purposes of distributional reinforcement learning, the Cram\'er distance has a number of appealing properties. First, it is both sum invariant and scale sensitive of order $\beta = \tfrac{1}{2}$. Then, the \cramer distance can be minimized by stochastic gradient methods.

\subsection{Approximation in the Distributional Setting}
\label{sec:approx}

Let us write $\P^\pi(x)$ for the distribution of the random variable $Z^\pi(x)$. 
There are two common hurdles to learning $\P^\pi(x)$: first, we typically do not have access to a simulator, and must instead rely on sample transitions; second, we cannot in general store the value distribution exactly, and instead must maintain an approximation. 
These two issues have been well studied in the expected value setting of reinforcement learning \citep[see, e.g.][]{bertsekas96neurodynamic,tsitsiklis97analysis}, in particular relating the mean behaviour of sample-based algorithms such as TD \citep{sutton88learning} to their operator counterparts, including in the context of linear function approximation. This section provides analogous notation describing sample-based methods for distributional reinforcement learning.

With a tabular representation, where distributions are stored exactly, \citet{rowland2018} showed the existence of a \emph{mixture update} with step-size $\alpha$:
\begin{equation*}
\P(x) \gets \P(x) + \alpha (f_{r, \gamma} (\P(x')) - \P(x)) \; .
\end{equation*}
In this mixture update, $f_{r, \gamma} (\P(x'))$ is the distribution corresponding to the random variable $r + \gamma Z(x')$, $Z(x') \sim \P(x')$. 
This update rule converges to $\P^\pi$ under the usual stochastic optimization conditions.

\citet{rowland2018} also analyzed a mixture update for approximately tabular representations, when $\P$ is constrained to be a distribution over uniformly-spaced atoms (we will describe this parametrization in greater detail in the next section). The modified update incorporates a projection step $\PiC$ which finds the constrained distribution $\P(x)$ closest to $f_{r, \gamma} (\P(x'))$ in Cram\'er distance:
\begin{equation}\label{eqn:mixture_update_approximately_tabular}
\P(x) \gets \P(x) + \alpha (\PiC f_{r, \gamma} (\P(x')) - \P(x)) \; .
\end{equation}
This projection step is used in the C51 algorithm, which parametrizes $\P_\Theta$ using a neural network with weights $\Theta$ and whose final layer uses a softmax transfer function to generate the vector of probabilities $\P_\Theta(x)$. Ignoring second order optimization terms, the C51 update  is
\begin{equation}
\Theta \gets \Theta - \alpha \nabla_\Theta D_{KL} (\PiC f_{r, \gamma} (\P_{\tilde \Theta}(x')) \cdbar \P_\Theta(x)), \label{eqn:mixture_update_approximate}
\end{equation}
where the use of the KL divergence is justified as the matching loss to the softmax, and $\tilde \Theta$ is a ``target'' copy of $\Theta$ \citep{mnih15human}.

Although the update rule Eq.~\eqnref{mixture_update_approximate} works well in practice, it is difficult to justify. The KL divergence is not scale sensitive, and it is not clear that its combination with the Cram\'er projection and the softmax function leads to a convergent algorithm.%

To address this issue, here we consider an update rule which directly minimizes the Cram\'er distance:
\begin{equation}
\Theta \gets \Theta - \alpha \nabla_\Theta D_C (f_{r, \gamma} (\P_{\tilde \Theta}(x')), \P_\Theta(x)) .\label{eqn:mixture_update_approximate_cramer}
\end{equation}
By the matching-loss argument, this suggests doing away with the transfer function and measuring the loss with respect to linear outputs. At first glance this might seem nonsensical, as these may not form a valid probability distribution. Yet, as we will see, the Cram\'er distance can be extended to deal with arbitrary vectors.

\section{Generalizing the Cram\'er Distance}
\label{sec:unnormalized_cramer}

In this section we generalize the Cram\'er distance to vectors which do not necessarily describe probability distributions. We then transform this generalized distance to obtain a loss that is suited to the distributional setting. At a high level, our approach is as follows:
\begin{enumerate}[leftmargin=0.9cm,topsep=0.5pt,itemsep=-1ex,partopsep=2ex,parsep=2ex]
	\item We rewrite the \cramer distance between distributions with discrete support as a weighted squared distance between vectors;
	\item We show that this distance has undesirable properties when generalized beyond the space of probability distributions, and address this by modifying the eigenstructure of the weighting used in defining the distance;
	\item We further modify the distance into a loss which regularizes the sum of vectors towards 1. This modification is key in our construction of an algorithm that is theoretically well-behaved when combined with linear function approximation.
\end{enumerate}

We consider the space $\cD$ of distributions over returns with finite, common, bounded support $\z = \{z_1, z_2, \dots, z_k\}$ with $z_i \le z_{i+1}$. In this context, Eq.~\eqnref{cramer_distance} simplifies to a sum with simple structure:
\begin{equation*}
D_C(\p, \q) = \sum_{i=1}^{k-1} (F_\p(z_i) - F_\q(z_i))^2 (z_{i+1} - z_i)
\end{equation*}
with $\p$, $\q$ in $\cD$, where $F_\p$ is the cumulative distribution function of $\p$:
\begin{equation*}
F_\p(z_i) = \sum_{j=1}^i \p(z_j).
\end{equation*}
We shall also assume that $k$ is odd and $\z = \{\frac{1-k}{2}, \dots, \frac{k-1}{2}\}$, i.e. $z_i = \frac{2i - 1 - k}{2}$, $z_{i+1} - z_i = 1$. Without detracting from our results, this simplifies their exposition.

Let $\p := [ p_1, p_2, \dots p_k ]$ and $\q := [ q_1, q_2, \dots q_k ]$ denote the vectors associated with $z_1, z_2, \dots, z_k$, and write $C$ for the lower-triangular matrix of 1s:
\begin{align*}
C &= \left[
\begin{array}{ccccccc}
1 & 0 & 0 & \ldots & 0 & 0 & 0\\
1 & 1 & 0 & \ldots & 0 & 0 & 0\\
1 & 1 & 1 & \ldots & 0 & 0 & 0\\
\vdots &  &  & \ddots &  &  & \vdots\\
1 & 1 & 1 & \ldots & 1 & 0 & 0\\
1 & 1 & 1 & \ldots & 1 & 1 & 0\\
1 & 1 & 1 & \ldots & 1 & 1 & 1
\end{array}
\right] \; .
\end{align*}
If $\sum_i p_i = 1, p_i \ge 0$ (resp., $\sum_i q_i = 1, q_i \ge 0$), these can be viewed as the probabilities of a distribution over $\z$.
Then, $C \p$ is the cumulative distribution of $\p$, and the Cram\'er distance between $\p$ and $\q$ becomes
\begin{align}
\ld_{CC^\top}^2(\p, \q) 	&:= \norm{C \p - C \q}^2 = \norm{\p - \q}^2_{C C^\top} . \label{eq:cramer_1}
\end{align}
One can replace the cumulative distributions with the tail cumulative distributions to get
\begin{align}
\ld_{C^\top C}^2(\p, \q) 	&= \norm{C^\top \p - C^\top \q}^2 = \norm{\p - \q}^2_{C^\top C} \; . \label{eq:cramer_2}
\end{align}
If $\p$ or $\q$ do not correspond to proper probability distributions, the Cram\'er distance of Eq.~\ref{eqn:cramer_distance} may be infinite, while Eq.~\ref{eq:cramer_1} and \ref{eq:cramer_2} remain finite. This suggests the use of this definition when comparing vector-valued objects that are close to, or attempt to approximate distributions.

However, the two distances can disagree when $\p$ and $\q$ do not correspond to proper probability distributions.
Let $\sum_i p_i$ be the ``mass'' of $\p$, reflecting its relationship to the mass of a probability distribution.
If $\p$ and $\q$ have different mass, then $\ld_{CC^\top}^2(\p, \cdot) \ne \ld_{C^\top C}^2(\p, \cdot)$. 
The issue is that Eq.~\ref{eq:cramer_1} and \ref{eq:cramer_2} measure differently the difference in mass.

To resolve this discrepancy, we modify the Cram\'er distance to deal unambiguously with uneven masses.
This leads to a two-part distance: The first is insensitive to differences of total mass while the second only penalizes that difference. Let
\begin{align*}
e = [1/\sqrt{k},  \ldots, 1/\sqrt{k}]^\top \quad &\textrm{and} \quad\Pi_{e^\perp} = I_k - ee^\top \; ,
\end{align*}
our distance is
\begin{align}
\label{eq:centered}
\ld^2_\lambda(\p, \q) &:= (\p - \q)^\top \Pi_{e^\perp} CC^\top \Pi_{e^\perp} (\p - \q)\nonumber\\
&\qquad + \lambda \left((\p - \q)^\top e\right)^2 \; .
\end{align}
Denoting $C_\lambda = \Pi_{e^\perp} CC^\top \Pi_{e^\perp} + \lambda e e^\top$, we have
\begin{equation*}
\ld^2_\lambda(\p, \q) = (\p-\q)^\top C_\lambda(\p - \q) = \norm{\p - \q}^2_{C_\lambda} .
\end{equation*}
First, one may note that, when $\p$ and $\q$ have the same total mass, we have $\ld^2_{CC^\top} = \ld^2_{C^\top C} = \ld^2_\lambda(\p, \q)$ for all values of $\lambda$. As such, this new distance clarifies the behaviour for arbitrary vectors while being consistent with the existing \cramer loss for proper distributions. For any given distribution $\p$, the solution to
\begin{equation*}
\min_{\q \in \bR^k} \ld^2_\lambda(\p, \q)
\end{equation*}
is $\p$. On the other hand, if the minimization is done over a constrained set, $\lambda$ determines the magnitude of the penalty from the difference in total mass.

As we will later see, the distance $\ld_\lambda$, used as a loss, is not sufficient to guarantee good behaviour with linear function approximation. Instead, we define a related loss but with an explicit normalization penalty:
\begin{align}
\hldl(\p, \q) &= (\p - \q)^\top \Pi_{e^\perp} CC^\top \Pi_{e^\perp} (\p - \q)\nonumber\\
&\qquad + \lambda \left(\q^\top e - 1\right)^2 \; .\label{eq:unit_norm}
\end{align}
Intuitively, $\hat{\ld}_\lambda$ recognizes that a distribution-like object should benefit from having unit mass. In the context of the distributional Bellman operator, this is the difference between backing up the mass at successor states versus normalizing the state's distribution to sum to 1. However, $\hat{\ld}_\lambda$ does not define a distance proper, and our theoretical treatment of it in Section \ref{sec:convergence} will require additional care.

\section{Analysis}
\label{sec:cramer_properties}
We now explore, through a series of lemmas, properties of the \cramer distance of relevance to distributional reinforcement learning. Two of these results will be related to the minimization of the \cramer loss directly over distributions and two will be related to the use of linear function approximation.

\subsection{Optimization properties}
We begin by analyzing properties resulting from the minimization of $\ld^2_\lambda$ over $\q$, beginning with the approximately tabular setting (Section \ref{sec:approx}).

\subsubsection{Impact on optimization speed}
A well-known result in convex optimization states that, when minimizing a quadratic function $f$ with positive definite Hessian $H$ using a batch first-order method, e.g., Eq. \ref{eqn:mixture_update_approximate_cramer}, the convergence to the optimum is linear with a rate of $ 1 - \frac{1}{\kappa}$ where $\kappa$ is the condition number of $H$. Assuming we directly optimize the \cramer loss over $\q$ with such a method, the convergence rate would depend on the condition number of the matrix used, i.e. $CC^\top $ when using the \cramer loss $\ld_{CC^\top}^2$ or $C_\lambda$ when using the extended loss $\ldl$.

\begin{restatable}[Condition number]{lemma}{condnumber}
\label{lemma:cond_number}
Let $\mathcal{C}$ be the set of symmetric matrices $M$ for which $(\p-\q)^\top M (\p-\q) = (\p-\q)^\top CC^\top (\p-\q)$ for all proper distributions $\p$ and $\q$. Let $\kappa_{\min}(\mathcal{C})$ the lowest condition number attained by matrices $M$ in $\mathcal{C}$. Then all the matrices of the form $C_\lambda$ with $\lambda \in [\lambda_{k-1}(C_0), \lambda_1(C_0)]$, where $\lambda_{k-1}(C_0)$ and $\lambda_1(C_0)$ are the second smallest and largest eigenvalues of $C_0$, respectively, have condition number $\kappa_{\min}(\mathcal{C})$.
\end{restatable}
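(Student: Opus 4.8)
The plan is to make the set $\mathcal{C}$ completely explicit, read off the spectrum of $C_\lambda$, and then invoke eigenvalue interlacing to show that no member of $\mathcal{C}$ can beat the condition number achieved by the matrices $C_\lambda$ in the stated range.

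First I would describe $\mathcal{C}$. As $\p,\q$ range over all proper distributions on $\z$, the difference $\p-\q$ sweeps out a full neighbourhood of the origin inside the zero-sum subspace $e^\perp=\{v:e^\top v=0\}$ (take $\q$ uniform and $\p=\q+v$ for small $v\in e^\perp$), so by homogeneity of the quadratic form a symmetric $M$ lies in $\mathcal{C}$ iff $v^\top M v=v^\top CC^\top v$ for every $v\in e^\perp$. Polarising this identity shows it is equivalent to $\Pi_{e^\perp}M\Pi_{e^\perp}=\Pi_{e^\perp}CC^\top\Pi_{e^\perp}=C_0$, i.e., writing $I=\Pi_{e^\perp}+ee^\top$ and expanding, to
\[
M=C_0+\mu\,ee^\top+e w^\top+w e^\top,\qquad \mu\in\bR,\ w\in e^\perp .
\]
Thus $\mathcal{C}$ is parametrised by a scalar $\mu$ and a vector $w\perp e$, and the family $\{C_\lambda\}$ is exactly the slice $w=0$, $\mu=\lambda$.

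Next I would compute the eigenvalues of $C_\lambda$. Since $\Pi_{e^\perp}e=0$, $C_0$ annihilates $e$ and leaves $e^\perp$ invariant, so in the orthogonal splitting $\bR^k=e^\perp\oplus\mathrm{span}(e)$ it is block diagonal with blocks $A:=C_0|_{e^\perp}$ and $0$; here $A$ is the compression of the positive definite matrix $CC^\top$ to $e^\perp$, hence symmetric positive definite with eigenvalues $\lambda_1(C_0)\ge\cdots\ge\lambda_{k-1}(C_0)>0$. Adding $\lambda\,ee^\top$ only replaces the trivial block by $\lambda$, so $C_\lambda$ has spectrum $\{\lambda_1(C_0),\dots,\lambda_{k-1}(C_0),\lambda\}$; for $\lambda\in[\lambda_{k-1}(C_0),\lambda_1(C_0)]$ its extreme eigenvalues are $\lambda_1(C_0)$ and $\lambda_{k-1}(C_0)$, giving condition number $\lambda_1(C_0)/\lambda_{k-1}(C_0)$ independently of $\lambda$.

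Finally I would prove optimality by interlacing. For any $M\in\mathcal{C}$, the compression of $M$ to the $(k-1)$-dimensional subspace $e^\perp$ is, by the first step, precisely $A$, whose eigenvalues are $\lambda_1(C_0)\ge\cdots\ge\lambda_{k-1}(C_0)$; Cauchy's interlacing theorem then forces $\lambda_{\max}(M)\ge\lambda_1(C_0)$ and $\lambda_{\min}(M)\le\lambda_{k-1}(C_0)$. If $M$ is not positive definite its condition number is $+\infty$; otherwise $\kappa(M)=\lambda_{\max}(M)/\lambda_{\min}(M)\ge\lambda_1(C_0)/\lambda_{k-1}(C_0)$. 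Either way $\kappa(M)\ge\lambda_1(C_0)/\lambda_{k-1}(C_0)$, so $\kappa_{\min}(\mathcal{C})=\lambda_1(C_0)/\lambda_{k-1}(C_0)$, attained by every $C_\lambda$ with $\lambda$ in the stated interval. The routine parts are the polarisation step and the spectral bookkeeping; the main thing to get right is the interlacing argument — invoking it for a compression from dimension $k$ down to $k-1$ and reading off that it pins $\lambda_{\max}(M)$ and $\lambda_{\min}(M)$ on the correct sides of $\lambda_1(C_0)$ and $\lambda_{k-1}(C_0)$ — together with fixing the convention that a non-positive-definite $M\in\mathcal{C}$ (which does occur, e.g.\ $\mu\le 0$) has infinite condition number.
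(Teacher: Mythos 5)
Your proof is correct and follows essentially the same route as the paper's: you first characterize $\mathcal{C}$ as $CC^\top$ (equivalently $C_0$) plus a symmetric rank-two perturbation involving $e$, observe that $C_\lambda$ is block-diagonal with spectrum $\{\lambda_1(C_0),\dots,\lambda_{k-1}(C_0),\lambda\}$, and then bound the extreme eigenvalues of an arbitrary $M\in\mathcal{C}$ from its compression to $e^\perp$ — your appeal to Cauchy interlacing is exactly the paper's Rayleigh-quotient argument applied to the top and second-smallest eigenvectors of $C_0$. Your polarisation step and your explicit handling of non-positive-definite members of $\mathcal{C}$ are slightly tidier than the paper's, but the substance is identical.
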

The proof of this result and the following may be found in the appendix.

Lemma~\ref{lemma:cond_number} shows that the optimal convergence rate is obtained for a potentially wide range of values for $\lambda$. As an example, for $k=51$, the condition number of $CC^\top$ is about 4296 while $\kappa_{\min}(\mathcal{C})$ is around 1053, about 4 times lower, and this is true for $\lambda$ in the range $[0.250, 263]$.

\subsubsection{Preservation of the expectation}
Although the prediction $\q$ may not be a distribution, it still makes sense to talk of the dot product between $\q$ and the support $\z$ as its ``expectation'': indeed, in many cases of interest the optimization procedure does yield valid distributions. In designing a full distributional agent, this generalized notion of expectation is also a natural way to convert $\q$ into a scalar value, e.g. for decision making.

This section discusses potential guarantees on the difference in expected return between $\p$ and $\q$ when $\q$ is the minimizer of the \cramer loss $\ld^2_\lambda$ over a restricted set. Typically, we will ask $\q$ to have a specific support but other constraints might include that $\q$ must be normalized or that some values of $\q$ cannot be modified.
Specifically, the following lemma studies the impact on the expectation when minimizing the \cramer loss over an affine subset.
\begin{restatable}[Expectation preserving]{lemma}{eprojection}
\label{lemma:e_projection}
Let $\p$ be an arbitrary distribution over a discrete support. Let $\Pi_{A, b}(\p)$ the projection of $\p$ onto the linear subset $\mathcal{S}_{A, b} = \left\{\q | A\q = b\right\}$. Then, if the first and the last columns of $A$ are equal, i.e. $A_1 = A_k$, then $\p$ and $\Pi_{A, b}(\p)$ have the same expectation.
\end{restatable}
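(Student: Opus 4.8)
The plan is to reduce the claim to one linear-algebra identity about the weighting matrix $C_\lambda = \Pi_{e^\perp}CC^\top\Pi_{e^\perp} + \lambda ee^\top$ that defines $\ld^2_\lambda$. Write $\q^\star := \Pi_{A,b}(\p)$ for the minimizer of $\q \mapsto \ld^2_\lambda(\p,\q) = \norm{\p-\q}^2_{C_\lambda}$ over the affine set $\cS_{A,b} = \{\q : A\q = b\}$, taking $\lambda > 0$ so that $C_\lambda$ is positive definite and $\q^\star$ unique. The first-order optimality condition for this strictly convex quadratic program states that the gradient $2C_\lambda(\q^\star-\p)$ is Euclidean-orthogonal to the direction space $\ker A$ of $\cS_{A,b}$, i.e. $C_\lambda(\q^\star-\p) \in \mathrm{range}(A^\top)$, equivalently $\q^\star-\p = C_\lambda^{-1}A^\top\mu$ for some vector $\mu$. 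Since the expectation of a vector $\q$ is $\z^\top\q$ and $C_\lambda^{-1}$ is symmetric, $\z^\top(\q^\star-\p) = (AC_\lambda^{-1}\z)^\top\mu$, so it suffices to prove the single identity $AC_\lambda^{-1}\z = \zerovec$.

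Everything then rests on showing $C_\lambda^{-1}\z = \mathbf{e}_k - \mathbf{e}_1$, where $\mathbf{e}_1$ and $\mathbf{e}_k$ are the first and last standard basis vectors; equivalently $C_\lambda(\mathbf{e}_k-\mathbf{e}_1) = \z$, which I would verify directly from the definition of $C_\lambda$. Two elementary observations carry it: (i) the support is symmetric about the origin, so $e^\top\z = \tfrac{1}{\sqrt{k}}\sum_i z_i = 0$, hence $\z \in e^\perp$ and the $\lambda ee^\top$ term plays no role; and (ii) $e^\top(\mathbf{e}_k-\mathbf{e}_1) = \tfrac{1}{\sqrt{k}}(1-1) = 0$, so $\Pi_{e^\perp}$ fixes $\mathbf{e}_k-\mathbf{e}_1$. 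It then remains to check $\Pi_{e^\perp}CC^\top(\mathbf{e}_k-\mathbf{e}_1) = \z$: since $CC^\top$ has entries $\min(i,j)$, the vector $CC^\top(\mathbf{e}_k-\mathbf{e}_1)$ has $i$-th entry $\min(i,k)-\min(i,1) = i-1$, and subtracting its mean $\tfrac{k-1}{2}$ leaves $i-\tfrac{k+1}{2} = z_i$, as required. Note that $\lambda$ drops out, matching its absence from the statement.

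Combining the two parts, $AC_\lambda^{-1}\z = A(\mathbf{e}_k-\mathbf{e}_1) = A_k - A_1 = \zerovec$ under the hypothesis $A_1 = A_k$, so $\z^\top\q^\star = \z^\top\p$ and the lemma follows. The only piece of real insight is spotting the identity $C_\lambda^{-1}\z \in \mathrm{span}(\mathbf{e}_k-\mathbf{e}_1)$, and it is in fact forced: for the conclusion to hold for every admissible $(A,b)$ one needs $C_\lambda^{-1}\z \in \ker A$ for all such $A$, and the intersection of those null spaces is exactly $\mathrm{span}(\mathbf{e}_k-\mathbf{e}_1)$ — yet once conjectured it is the two-line computation above. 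I expect everything else to be routine bookkeeping: phrasing the optimality condition so that it requires no rank assumption on $A$, reading $\Pi_{A,b}$ as the projection onto the affine set $\cS_{A,b}$ in the $\ld^2_\lambda$ metric (rather than onto any set of probability vectors), and, if the degenerate $\lambda = 0$ case is wanted, dealing with the non-uniqueness of the projection when $C_0$ is only positive semidefinite.
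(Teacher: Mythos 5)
Your proposal is correct and follows essentially the same route as the paper: the first-order optimality condition gives $\Pi_{A,b}(\p)-\p \in C_\lambda^{-1}\,\mathrm{range}(A^\top)$ (the paper obtains this via an explicit block-matrix inversion of the KKT system), and the key identity $C_\lambda^{-1}\z = \mathbf{e}_k-\mathbf{e}_1$ is exactly the paper's Proposition~\ref{prop:cz}, which you verify by the same entrywise computation of $CC^\top(\mathbf{e}_k-\mathbf{e}_1)$ followed by centering. No gaps.
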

Lemma~\ref{lemma:e_projection} covers projections onto specific supports, as used in C51, as well as constraints on the total mass of $\Pi_{A, b}(\p)$, for instance that the projection has unit mass. Here, we use $\z$ to denote both the support and the vector containing the elements of that support. More generally, the \cramer projection offers a certain amount of freedom on the constraints that can be enforced while still preserving the expectation. In particular, leaving the two boundaries unconstrained is enough to preserve the expectation.

\subsection{Linear function approximation}
We next quantify the behaviour of our generalized loss function when combined to linear function approximation. Section \ref{sec:convergence} is the main theoretical contribution of this paper: it shows that the combination of a loss based on Equation \ref{eq:unit_norm} together with linear approximation produces a stable dynamical system, and quantifies the approximation error that results from it.

\subsubsection{Two-step optimization}
In categorical distributional RL, the target distribution $\p$ is the product of an application of the distributional Bellman operator and does not usually have the same support as the parametrized output distribution $\q(\theta)$. Recall that $\cD$ is the set of distributions with support $\z$. C51 first projects $\p$ onto $\cD$, yielding
\begin{equation*}
\Pi_{\lambda, \cD}(\p) = \arg \min_{\textbf{u} \in \cD} \ldl(\p, \textbf{u}),
\end{equation*}
assuming $\p$ is a proper distribution. Then, as a second step in the update process, it minimizes the KL divergence between $\Pi_{\lambda, \cD}(\p)$ and $\q(\theta)$.

In our experiments we retain the projection onto $\z$ from the C51 algorithm, and subsequently minimize our loss with respect to this projection.
Doing so is equivalent to directly minimizing the \cramer loss, even when $\p$ is not a proper distribution. Extending the result from Lemma 3 of \citet{rowland2018}, we note that $\Pi_{\lambda, \cD}$ is an orthogonal projection for $\q(\theta) \in \cD$:
\begin{equation*}
\ldl(\p, \q(\theta)) = \ldl(\p, \Pi_{\lambda, \cD}(\p)) + \ldl(\Pi_{\lambda, \cD}(\p), \q(\theta)).
\end{equation*}
Taking the derivative of the two sides of this equation with respect to $\theta$, the parameters of the model, yields 
\begin{equation*}
\frac{\partial \ldl(\p, \q(\theta))}{\partial \theta} = \frac{\partial \ldl(\Pi_{\lambda, \cD}(\p), \q(\theta))}{\partial \theta}
\end{equation*}
and minimizing the distance with the projection of $\p$ onto the support $\z$ of $\q$ leads to the same gradients. With some additional care, the argument extends to the loss with a normalization penalty, $\hldl$.

\subsubsection{Convergence to a fixed point}\label{sec:convergence}
We are now ready to show the convergence of distributional RL in the context of linear function approximation. Recall that  a proof of convergence for C51 is hindered by the failure of the KL minimization process to be nonexpansive in the Cram\'er distance; as we will see, our result critically depends on the loss defined in Equation \ref{eq:unit_norm}.

We consider a feature matrix $\Phi \in \bR^{n \times m}$, with $n$ the number of states and $m$ the number of features, and a weight matrix $\Theta \in \bR^{m \times k}$. That is, we consider outputs of the form $\Q = \Phi \Theta \in \bR^{n \times k}$, which with some abuse of terminology we call value distributions. As before, we write $\Q(x)$ to denote the $k$-dimensional output for state $x \in \cX$.

We study a stochastic update rule of the form given by Eq.~\eqnref{mixture_update_approximate_cramer}, but where $D_C$ is replaced by the loss $\hldl$. When the states to be updated are sampled according to a distribution $\xi$, the expected behaviour of this update rule corresponds to an operator akin to a projection \citep{tsitsiklis97analysis}. In our setting, the operator minimizes the $\xi$-weighted Cram\'er loss derived from $\hldl$, denoted
\begin{equation*}
\hllxi(\P, \Q) := \sum_{x \in \cX} \xi(x) \hldl(\P(x), \Q(x)) .
\end{equation*}
We denote this operator by $\nPPi$ (the notation is made explicit in the appendix). Given a value distribution $\P \in \bR^{n \times k}$, the operator finds the value distribution in the span of $\Phi$ which minimizes $\hllxi(\P, \cdot)$:
\begin{equation*}
\nPPi \P = \Phi \Theta^* \quad \text{where} \quad \Theta^* = \argmin_{\Theta} \hllxi(\P, \Phi \Theta) .
\end{equation*}
Finally, our analysis is performed with respect the distance $\ldl$, rather than the loss $\hldl$ (which is not a distance). This leads to the $\xi$-weighted distance
\begin{equation*}
\llxi (\P, \Q) := \sum_{x \in \cX} \xi(x) \ldl(\P(x), \Q(x)),
\end{equation*}
with corresponding projection operator $\PPi_{\xi, \lambda, \Phi}$.

\begin{figure*}[htb]
\begin{center}
\includegraphics[width=1\textwidth]{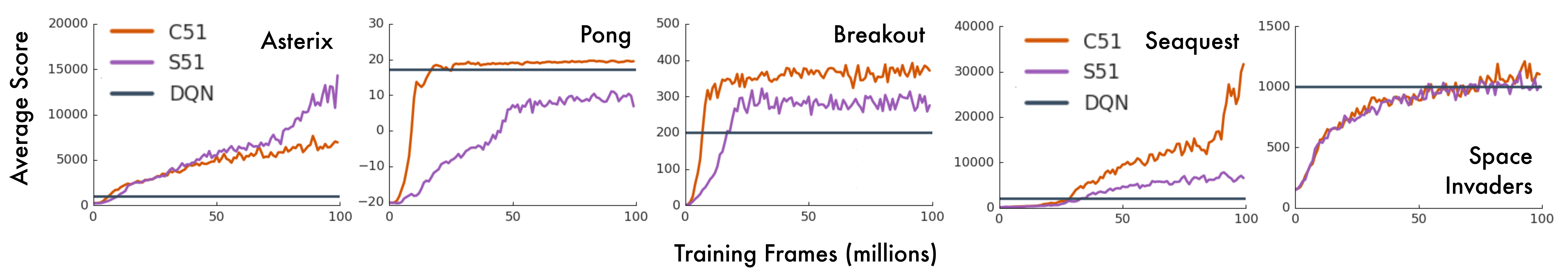}%
\caption{Learning curves (training scores) for C51 and S51 on five Atari 2600 games, and reference score for DQN at 100 million frames as given by \citet{bellemare2017distributional}.
\label{fig:c51_s51_results}}
\end{center}
\end{figure*}

We now show that the combination of the distributional Bellman operator $\mathcal{T}^\pi$ and the $\xi$-weighted, projection-like operator describes a convergent algorithm. When $\lambda > 0$, we can further bound the distance of this fixed point to the true value distribution $\P^\pi$ in terms of the best approximation in the class, $\PPi_{\xi, \lambda, \Phi} \P^\pi$. 
As is usual, $\xi$ is taken to be the stationary distribution of the Markov chain described by $\Ppi$: $\xi(x') = \sum_{x \in \cX} \xi(x) \Ppi(x' \cbar x)$.
\begin{restatable}[Convergence of the projected distributional Bellman process]{theorem}{convergencedistrib}
\label{theorem:convergencedistrib}
Let $\xi$ be the stationary distribution induced by the policy $\pi$. The process
\begin{align*}
\P_0 &:= \Phi \Theta_0 \quad , \quad \P_{k+1} := \nPPi \cTpi \P_k .
\end{align*}
converges to a set $S$ such that for any two $\P, \P' \in S$, there is a $\cX$-indexed vector of constants $\alpha$ such that
\begin{equation*}
\P(x) = \P'(x) + \alpha (x) e .
\end{equation*}
If $\lambda > 0$, $S$ consists of a single point $\tilde \P$ which is the fixed point of the process. Furthermore, we can bound the error of this fixed point with respect to the true value distribution $\P^\pi$:
\begin{align*}
\llxi(\tilde \P, \P^\pi) &\le \frac{1}{1-\gamma} \llxi(\lPPi \P^\pi, \P^\pi)\\
&\qquad - \frac{\gamma \lambda}{1 - \gamma} \big \| \tilde \P - \P^\pi \big \|^2_{\xi, \eet},
\end{align*}
where the second term measures the difference in mass between $\tilde \P$ and $\P^\pi$.
\end{restatable}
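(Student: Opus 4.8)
The plan is to recast the recursion in terms of $\xi$-weighted quadratic forms and to split the loss into a ``shape'' and a ``mass'' channel. Write $W := \Pi_{e^\perp} C C^\top \Pi_{e^\perp}$, so that $C_\lambda = W + \lambda\,\eet$ and, for $\P,\Q \in \bR^{n\times k}$, $\llxi(\P,\Q) = \norm{\P-\Q}^2_{\xi,W} + \lambda\,\norm{\P-\Q}^2_{\xi,\eet}$: the first term is blind to the per-state mass (it has $e$ in its kernel), the second depends only on it. First I would record, in the spirit of the two-step-optimization discussion in Section~\ref{sec:convergence} and of Lemma~\ref{lemma:e_projection}, that minimizing $\hllxi(\P,\cdot)$ over the column span of $\Phi$ equals the $\llxi$-orthogonal projection onto that span of a \emph{modified} target that keeps the shape of $\P$ but whose per-state mass is that of a proper distribution; concretely $\nPPi\P = \lPPi\big(\Pi_{e^\perp}\P + \P^\pi_e\big)$, where $\P^\pi_e(x) = \tfrac1k\mathbf 1$ is the state-independent uniform distribution over $\z$ forming the $e$-component of every $\P^\pi(x)$. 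Thus the process is the affine recursion $\P_{k+1} = \lPPi\,G(\P_k)$ with $G(\P) := \Pi_{e^\perp}\cTpi\P + \P^\pi_e$, and $\P^\pi$ is itself a fixed point of $G$, since $\cTpi\P^\pi = \P^\pi$ and $\P^\pi$ has unit mass everywhere.

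\textbf{Convergence.} The engine is that $\cTpi$ — and hence $G$ — contracts the shape seminorm: sum invariance discards the reward shift, scale sensitivity of order $\tfrac12$ turns the $\gamma$-scaling of the support into a factor $\gamma$ on the squared \cramer term, convexity of $t\mapsto t^2$ pushes the $\Ppi$-mixture inside, and stationarity of $\xi$ converts $\sum_x\xi(x)\sum_{x'}\Ppi(x'\cbar x)$ into $\sum_{x'}\xi(x')$; combined with nonexpansiveness of the \cramer projection onto $\z$ this gives, for all $\P_1,\P_2$, $\norm{G(\P_1)-G(\P_2)}^2_{\xi,C_\lambda} = \norm{G(\P_1)-G(\P_2)}^2_{\xi,W} \le \gamma\,\norm{\P_1-\P_2}^2_{\xi,W} \le \gamma\,\norm{\P_1-\P_2}^2_{\xi,C_\lambda}$, where the first equality uses that $G(\P_1)-G(\P_2)$ lies in the zero-mass subspace (so the $\lambda\,\eet$ part of $C_\lambda$ vanishes on it). Since $\lPPi$ is the orthogonal projection for the inner product induced by $C_\lambda$, which for $\lambda>0$ is positive definite, it is nonexpansive, so $\lPPi\,G$ is a $\sqrt\gamma$-contraction on $(\bR^{n\times k},\llxi)$ and Banach's theorem yields a unique fixed point $\tilde{\P}$ with $\P_k \to \tilde{\P}$. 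For $\lambda=0$, $C_0 = W$ is only positive semidefinite with kernel the per-state multiples of $e$; one then works modulo this subspace, on which $\norm{\cdot}^2_{\xi,W}$ becomes a genuine norm and $\lPPi\,G$ a $\sqrt\gamma$-contraction, so the iterates converge to a set $S$ any two elements of which differ by a $\cX$-indexed vector of multiples of $e$.

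\textbf{Error bound.} For $\lambda>0$ I would mimic the Tsitsiklis--Van Roy argument in the $C_\lambda$-inner product. Because $\lPPi\P^\pi$ lies in the span of $\Phi$ while $\P^\pi - \lPPi\P^\pi$ is $C_\lambda$-orthogonal to it, Pythagoras gives $\llxi(\tilde{\P},\P^\pi) = \norm{\tilde{\P}-\lPPi\P^\pi}^2_{\xi,C_\lambda} + \llxi(\lPPi\P^\pi,\P^\pi)$. For the first term, $\tilde{\P} = \lPPi\,G(\tilde{\P})$ and $G(\P^\pi)=\P^\pi$, so by nonexpansiveness of $\lPPi$ and then the contraction above, $\norm{\tilde{\P}-\lPPi\P^\pi}^2_{\xi,C_\lambda} \le \norm{G(\tilde{\P})-G(\P^\pi)}^2_{\xi,C_\lambda} \le \gamma\,\norm{\tilde{\P}-\P^\pi}^2_{\xi,W}$. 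Substituting and using $\llxi(\tilde{\P},\P^\pi) = \norm{\tilde{\P}-\P^\pi}^2_{\xi,W} + \lambda\,\norm{\tilde{\P}-\P^\pi}^2_{\xi,\eet}$, the $\norm{\cdot}^2_{\xi,W}$ contributions collect to $(1-\gamma)\norm{\tilde{\P}-\P^\pi}^2_{\xi,W}$, and solving for $\llxi(\tilde{\P},\P^\pi)$ gives exactly
\[ \llxi(\tilde{\P},\P^\pi) \;\le\; \frac{1}{1-\gamma}\,\llxi(\lPPi\P^\pi,\P^\pi) \;-\; \frac{\gamma\lambda}{1-\gamma}\,\norm{\tilde{\P}-\P^\pi}^2_{\xi,\eet}. \]

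\textbf{Main obstacle.} The delicate step is the shape-contraction of $G$ once the \cramer projection onto the support $\z$ is in play: because the iterates need not be proper distributions, this projection and the re-centering $\Pi_{e^\perp}$ do not commute, and the Bellman backup of the uniform ($e$-)component of $\P$ is not uniform after re-projection onto $\z$, which couples the mass and shape channels. Pinning down this coupling and recovering the clean contraction $\norm{G(\P_1)-G(\P_2)}^2_{\xi,C_\lambda}\le\gamma\norm{\P_1-\P_2}^2_{\xi,W}$ used above is exactly where the explicit normalization penalty of Equation~\ref{eq:unit_norm} — rather than the plain distance $\ldl$ — is needed, and I expect to isolate it in a lemma comparing $\nPPi\cTpi$ with $\lPPi\cTpi$ on the zero-mass subspace. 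The remaining pieces (positivity of $\xi$, positive-definiteness of $C_\lambda$ for $\lambda>0$, and the finite-dimensional linear algebra around $W$, $\eet$, and the span of $\Phi$) are routine.
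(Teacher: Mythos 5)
Your proposal is correct and follows essentially the same route as the paper: the identity $\nPPi \cTpi \P = \lPPi \tilde\cTpi \P$ with the mass-resetting operator $\tilde\cTpi \P = \Pi_{e^\perp}\cTpi\P + e/\sqrt{k}$ (your $G$), the decomposition of $C_\lambda$ into the shape part $\Pi_{e^\perp}CC^\top\Pi_{e^\perp}$ and the mass part $\lambda \eet$, the $\sqrt{\gamma}$-contraction of the shape channel combined with annihilation of the mass channel, and the Pythagoras-plus-fixed-point algebra yielding the $\tfrac{1}{1-\gamma}$ bound with the $-\tfrac{\gamma\lambda}{1-\gamma}\norm{\tilde\P-\P^\pi}^2_{\xi,\eet}$ correction are all exactly the paper's argument. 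The coupling you flag as the main obstacle is dispatched in the paper's Lemma~\ref{lemma:xiweighted} by noting that the Cram\'er projection onto $\z$ preserves total mass and that the $\Pi_{e^\perp}CC^\top\Pi_{e^\perp}$-seminorm is invariant to per-state shifts along $e$, so the two channels decouple after all.
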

Theorem~\ref{theorem:convergencedistrib} is significant for a number of reasons. First, it answers the question left open by~\citet{rowland2018}, namely whether a proof of convergence exists for the distributional setting with an approximate representation, and with which representation. Second, it shows that there is a trade-off between the different components of the loss -- while our result concerns linear function approximation, it suggests that similar trade-offs must exist within other distributional algorithms.

The parameter $\lambda$ plays an important role in the theorem, both to guarantee convergence and (indirectly) to determine the approximation error. At a high level, this makes sense: a high value of $\lambda$ forces the algorithm to output something close to a distribution, at the expense of actual predictions. On the other hand, taking $\lambda = 0$ yields a process which may not converge to a single point. Finally, we note that to guarantee convergence to a unique fixed point, it is not enough to use the loss from Eq.~\ref{eq:centered}: in that case, we can only guarantee convergence to the set $S$, even for $\lambda > 0$. The following lemma, used to prove Theorem \ref{theorem:convergencedistrib}, shows why: the distributional Bellman operator $\cT$ is only a nonexpansion along the dimension $e$, which captures the mass of the output vectors.

\begin{restatable}{lemma}{xiweighted}\label{lemma:xiweighted}
Let $\xi$ be the stationary distribution induced by the policy $\pi$. Write $\cTpi' := \Pi_{\lambda, \cD} \cTpi$ to mean the distributional Bellman operator followed by a projection onto the support $\z = z_1, \dots, z_k$. For a matrix $B \in \bR^{k \times k}$ and $\Delta \in \bR^{n \times k}$, write
\begin{equation*}
\norm{\Delta}^2_{\xi, B} = \sum_{x \in \cX} \xi(x) \norm{\Delta(x)}^2_B .
\end{equation*}
Then for any two value distributions $\P, \Q \in \bR^{n \times k}$,
\begin{align*}
\norm{\cTpi' \P - \cTpi' \Q}^2_{\xi, \Aat} \le \gamma \norm{\P - \Q}^2_{\xi, \Aat}\\ \norm{\cTpi' \P - \cTpi' \Q}^2_{\xi, \eet} \le \norm{\P - \Q}^2_{\xi, \eet} .
\end{align*}
where $A := \Pi_{e^\perp} C$.
\end{restatable}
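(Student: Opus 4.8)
The plan is to factor $\cTpi' = \Pi_{\lambda, \cD} \circ \cTpi$ and push the two seminorms $\norm{\cdot}_{\xi, \Aat}$ and $\norm{\cdot}_{\xi, \eet}$ through each factor independently, using the orthogonal split $C_\lambda = \Aat + \lambda\, \eet$: $\Aat$ sees only the ``shape'' of a vector (its $e^\perp$-component, where it agrees with the centered Cram\'er distance) while $\eet$ sees only its total mass. Two facts will be used repeatedly. (a) $\cTpi Z(x) = R(x) + \gamma\,\Ppi Z(x)$ is sum invariant and scale sensitive of order $\tfrac12$ in the Cram\'er distance, so on the shape component the reward shift does not increase the distance and the $\gamma$-rescaling of the support multiplies the \emph{squared} centered Cram\'er distance by exactly $\gamma$. (b) Because $\xi$ is stationary for $\Ppi$, the matrix $\Ppi$ is a $\xi$-weighted nonexpansion for every positive semidefinite $B$: by Jensen, $\sum_{x}\xi(x)\,\norm{\sum_{x'}\Ppi(x'\cbar x)\,\Delta(x')}^2_B \le \sum_{x'}\bigl(\sum_x \xi(x)\Ppi(x'\cbar x)\bigr)\norm{\Delta(x')}^2_B = \norm{\Delta}^2_{\xi,B}$.

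I would do the mass inequality first, as it is essentially free. Shifting and rescaling a signed measure preserve total mass, and a projection onto the grid $\z$ only moves mass among the atoms of $\z$ (clipping any mass outside $[z_1,z_k]$ to the nearest endpoint), so the mass of $\cTpi'\P(x)$ equals that of $\Ppi\P(x)$; equivalently $e^\top\cTpi'\P(x) = (\Ppi\, m_\P)(x)$ for all $x$, where $m_\P \in \bR^n$ collects the row-masses $e^\top\P(x)$. Hence $\norm{\cTpi'\P - \cTpi'\Q}^2_{\xi,\eet} = \norm{\Ppi(m_\P - m_\Q)}^2_\xi \le \norm{m_\P - m_\Q}^2_\xi = \norm{\P - \Q}^2_{\xi,\eet}$ by (b) in the scalar case.

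For the shape inequality I would fix a state $x$ and chain three bounds. First, extending Lemma~3 of \citet{rowland2018} to improper distributions, the projection onto $\z$ is a nonexpansion of the centered Cram\'er distance (the integral-form analogue of $\norm{\cdot}^2_{\Aat}$) — this follows from the Pythagorean identity of the excerpt together with the mass-preservation just noted — which removes the outer $\Pi_{\lambda,\cD}$. Second, fact~(a): sum invariance discards $R(x)$ and scale sensitivity of order $\tfrac12$ converts the $\gamma$-rescaling into a factor $\gamma$, bounding the centered Cram\'er distance between $\cTpi\P(x)$ and $\cTpi\Q(x)$ by $\gamma\,\norm{\Ppi(\P-\Q)(x)}^2_{\Aat}$ (the $\Ppi$-mixtures are still supported on $\z$, so this is again a matrix seminorm). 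Third, average over $x$ against $\xi$ and apply fact~(b) with $B = \Aat$ to obtain $\norm{\cTpi'\P - \cTpi'\Q}^2_{\xi,\Aat} \le \gamma\,\norm{\P-\Q}^2_{\xi,\Aat}$.

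The hard part will be making ``centered Cram\'er distance'' rigorous for the measures on $\bR$ that $\cTpi$ produces before the projection: their masses may differ from one another and from $1$, and their support may leave $[z_1,z_k]$. Concretely, I would need to (i) define the integral-form generalization of $(\p-\q)^\top\Pi_{e^\perp}CC^\top\Pi_{e^\perp}(\p-\q)$ and check it reduces to that form on $\z$-supported vectors; (ii) verify it is still sum invariant and scales by $\gamma$ under rescaling of the support; and (iii) prove the projection onto $\z$ is nonexpansive in it even for improper inputs --- this last item is the real extension of Rowland et al.'s orthogonality lemma. Everything else is the routine stochastic-matrix averaging of fact~(b); in particular the mass component needs nothing beyond the observation that $\Ppi$ acts on total mass exactly as a row-stochastic matrix.
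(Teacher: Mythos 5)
Your proof of the mass inequality is the paper's proof: mass is preserved by the reward shift, the $\gamma$-rescaling, and the Cram\'er projection, so $e^\top \cTpi' \P(x) = \sum_{x'} \Ppi(x' \cbar x)\, e^\top \P(x')$, and Jensen plus stationarity of $\xi$ finishes it. No issues there.

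For the $\Aat$ inequality, however, your route has a genuine gap, and you name it yourself: items (i)--(iii) --- defining a centered Cram\'er distance on improper measures over $\bR$, re-proving sum invariance and order-$\tfrac12$ scale sensitivity for it, and showing the grid projection is nonexpansive in it for improper inputs --- are exactly where the content of the lemma lives, and the proposal leaves all three undone. The existing results you want to lean on (\citet{bellemare2017cramer}'s contraction and \citet{rowland2018}'s Lemma~3) are stated for proper probability distributions, so ``extending'' them is not a citation but a construction. The paper avoids this entirely with one observation: $\Aat = \Pi_{e^\perp} C C^\top \Pi_{e^\perp}$ annihilates the $e$ direction, so one may add a state-dependent multiple $\alpha(x) e$ to $\P(x)$ (and similarly to $\Q(x)$) to make both unit-mass proper distributions without changing either $\norm{\P - \Q}^2_{\xi,\Aat}$ or $\norm{\cTpi'\P - \cTpi'\Q}^2_{\xi,\Aat}$; the known $\sqrt{\gamma}$-contraction of the projected distributional Bellman operator in Cram\'er distance then applies off the shelf, combined with the same Jensen/stationarity averaging you already use for the mass term. (That trick does carry its own implicit step --- that $\cTpi'(\P + \alpha e)$ and $\cTpi'\P$ differ only along $e$, since $\cTpi'$ is linear in the measure --- but it requires no new theory of improper Cram\'er distances.) I would recommend replacing your items (i)--(iii) with this normalization argument; as written, your sketch defers the hardest step rather than resolving it.
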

When $\P$ and $\Q$ have equal mass, we recover the contraction result by \citet{bellemare2017cramer} (albeit in $\xi$-weighted Cram\'er distance, rather than maximal Cram\'er distance) -- however, this also shows that our generalization of the distributional Bellman operator deals differently with probability mass itself. This is why Theorem \ref{theorem:convergencedistrib} requires the normalization penalty loss $\hldl$, rather than the simpler $\ldl$.

\subsubsection{Bound on the approximation error}

Our analysis provides us with a partial answer to the question: why and when should distributional reinforcement learning perform better empirically? In the linear approximation case that we study here, one answer is that it might hurt performance, as the following theorem suggests:
\begin{restatable}[Error bound for the expected value]{theorem}{convergenceexpectation}
\label{theorem:convergenceexpectation}
Let $\norm{\cdot}_\xi$ be the $\xi$-weighted norm over value functions. The squared expectation error of the fixed point $\tilde \P$ with respect to the true value function $V^\pi$ is bounded as
\begin{equation*}
\norm{\expects_{\tilde \P} \z - V^\pi}^2_{\xi} \le \|C_\lambda^{-1/2}\z\|^2 \ll^2_{\xi, \lambda}(\tilde \P, \P^\pi) .
\end{equation*}
\end{restatable}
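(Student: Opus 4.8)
The plan is to reduce the bound to a state-wise Cauchy--Schwarz inequality in the inner product induced by $C_\lambda$. First I would collect the structural facts needed. (i) For $\lambda > 0$ the matrix $C_\lambda = \Pi_{e^\perp} CC^\top \Pi_{e^\perp} + \lambda ee^\top$ is symmetric positive definite: the first summand is positive definite on $e^\perp$ because $C$ is invertible, and $\lambda ee^\top$ is positive definite on $\mathrm{span}(e)$; hence $C_\lambda^{1/2}$ and $C_\lambda^{-1/2}$ are well defined, and $\ld^2_\lambda(\p,\q) = \norm{\p-\q}^2_{C_\lambda}$ as recorded in the text. (ii) With the paper's convention that the ``expectation'' of a value-distribution vector at a state $x$ is the dot product with $\z$, and since $\P^\pi(x)$ is a genuine return distribution with mean $V^\pi(x)$, we have $V^\pi(x) = \P^\pi(x)^\top \z$, i.e.\ $V^\pi = \expects_{\P^\pi}\z$.

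Writing $\Delta(x) := \tilde\P(x) - \P^\pi(x)$, the expected-value error at state $x$ is the linear functional $(\expects_{\tilde\P}\z - V^\pi)(x) = \Delta(x)^\top \z$, so that
\begin{align*}
\norm{\expects_{\tilde\P}\z - V^\pi}^2_\xi &= \sum_{x \in \cX} \xi(x)\, \big(\Delta(x)^\top \z\big)^2 ,\\
\ll^2_{\xi,\lambda}(\tilde\P,\P^\pi) &= \sum_{x \in \cX}\xi(x)\,\norm{\Delta(x)}^2_{C_\lambda} .
\end{align*}
The key step is, for each $x$, to insert $I = C_\lambda^{1/2}C_\lambda^{-1/2}$ and apply Cauchy--Schwarz in the standard Euclidean inner product:
\begin{equation*}
\big(\Delta(x)^\top \z\big)^2 = \big\langle C_\lambda^{1/2}\Delta(x),\, C_\lambda^{-1/2}\z\big\rangle^2 \;\le\; \norm{\Delta(x)}^2_{C_\lambda}\,\norm{C_\lambda^{-1/2}\z}^2 .
\end{equation*}
Multiplying by $\xi(x)\ge 0$, summing over $x\in\cX$, and pulling the state-independent constant $\norm{C_\lambda^{-1/2}\z}^2$ out of the sum yields the claimed inequality.

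There is no obstacle of real mathematical substance here --- the content is a single application of Cauchy--Schwarz --- and the only points I would handle with care are conventions: (a) that $\lambda>0$ is in force, which is exactly the regime in which Theorem~\ref{theorem:convergencedistrib} produces the fixed point $\tilde\P$ and is what makes $C_\lambda^{-1/2}$ meaningful; and (b) that the identification $V^\pi = \expects_{\P^\pi}\z$ is legitimate in the ambient space in which $\ll^2_{\xi,\lambda}(\tilde\P,\P^\pi)$ was defined. Optionally I would remark that, since $\z$ is symmetric about the origin, $e^\top\z = 0$, so $\z\in e^\perp$ and $\norm{C_\lambda^{-1/2}\z}^2$ in fact does not depend on $\lambda$; the same observation shows the Cauchy--Schwarz step discards precisely the mass term $\lambda\,\norm{\Delta(x)}^2_{ee^\top}$, which is the quantity appearing with a minus sign in Theorem~\ref{theorem:convergencedistrib}.
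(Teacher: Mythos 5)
Your proof is correct and is essentially the paper's own argument: the paper bounds the same per-state ratio $\langle \Delta(x), \z\z^* \Delta(x)\rangle / \langle \Delta(x), C_\lambda \Delta(x)\rangle$ by $\sup_f \langle f, \z\z^* f\rangle / \langle f, C_\lambda f\rangle = \|C_\lambda^{-1/2}\z\|^2$ via its Rayleigh-quotient lemma together with the observation that $C_\lambda^{-1/2}\z\z^* C_\lambda^{-1/2}$ is rank one, and for a rank-one numerator that is exactly your Cauchy--Schwarz step. Your phrasing is if anything slightly cleaner, since it never divides by $\langle \Delta(x), C_\lambda \Delta(x)\rangle$, which vanishes at states where $\tilde\P(x)=\P^\pi(x)$.
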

The proof relies on a Rayleigh quotient argument, and shows that the bound is tight if the error vector $\tilde \P(x) - \P^\pi(x)$ is collinear with $C_\lambda^{-1/2} \z$. In particular, if we take $\lambda$ such that $C_\lambda = C C^T$, then the constant is $\|C_\lambda^{-1/2} \z\|^2 = \|e \sqrt{k} \|^2 = k$. Then, as $\lambda \to 0$, the constant goes to infinity.
By contrast, the bound on the approximate value function derived from~\citet{tsitsiklis97analysis} is better in two respects: first, its equivalent constant is 1. Second, our bound contains an amplification factor $1 / \sqrt{1 - \gamma}$ from the error term $\ll^2_{\xi, \lambda}(\tilde \P, \P^\pi)$, which in their bound becomes the smaller constant $1 / \sqrt{1 - \gamma^2}$, because the usual Bellman operator is a $\gamma$-contraction in $\norm{\cdot}_\xi$, while the Cram\'er distance is only a $\sqrt{\gamma}$-contraction in the equivalent norm.

However, the bound is slightly misleading. In our analysis we have assumed that the width of the support, i.e. $z_k - z_1$, also grows with $k$. We can instead normalize the $C$ matrix and the support $\z$ to reflect a fixed width: $C' = C / k$ and $z' = z / k$. In this case, the constant remains but the squared loss may in some cases be $k$ times smaller.
Still, it is not unreasonable to expect that, given that the distributional approach models more things, it should be more susceptible to misspecification.

\section{Experiments}
\label{sec:experiments}

The \cramer distance enjoys many theoretical properties that the KL divergence used in C51 lacks. To complement our theoretical results in the policy evaluation setting, we now study how our new loss affects the overall performance in the more complex control setting \citep{sutton98reinforcement}. 
Our goals are to demonstrate that we can achieve qualitatively comparable performance to C51 with an algorithm based on this loss, and to study the similarities and differences between the two algorithms. 

We compare the original C51 algorithm with our \cramer variant from Eq.~\ref{eq:unit_norm}, dubbed S51, on five games supported by the Arcade Learning Environment~\citep{bellemare2013arcade}, and using the Dopamine framework \citep{castro18dopamine}. In a nutshell, S51 learns from samples, using the sample-based version of the distributional Bellman operator (Eq.~\ref{eqn:distributional_bellman_operator}), but where the fixed policy is replaced by one which backs up the distribution with maximum expected value (what \citet{rowland2018} calls ``Categorical Q-Learning'').
Further experimental details, including on how to transform C51 into S51, are given in Appendix \ref{sec:experimental_details}.
\begin{figure*}[htb]
\begin{center}
\includegraphics[width=.95\textwidth]{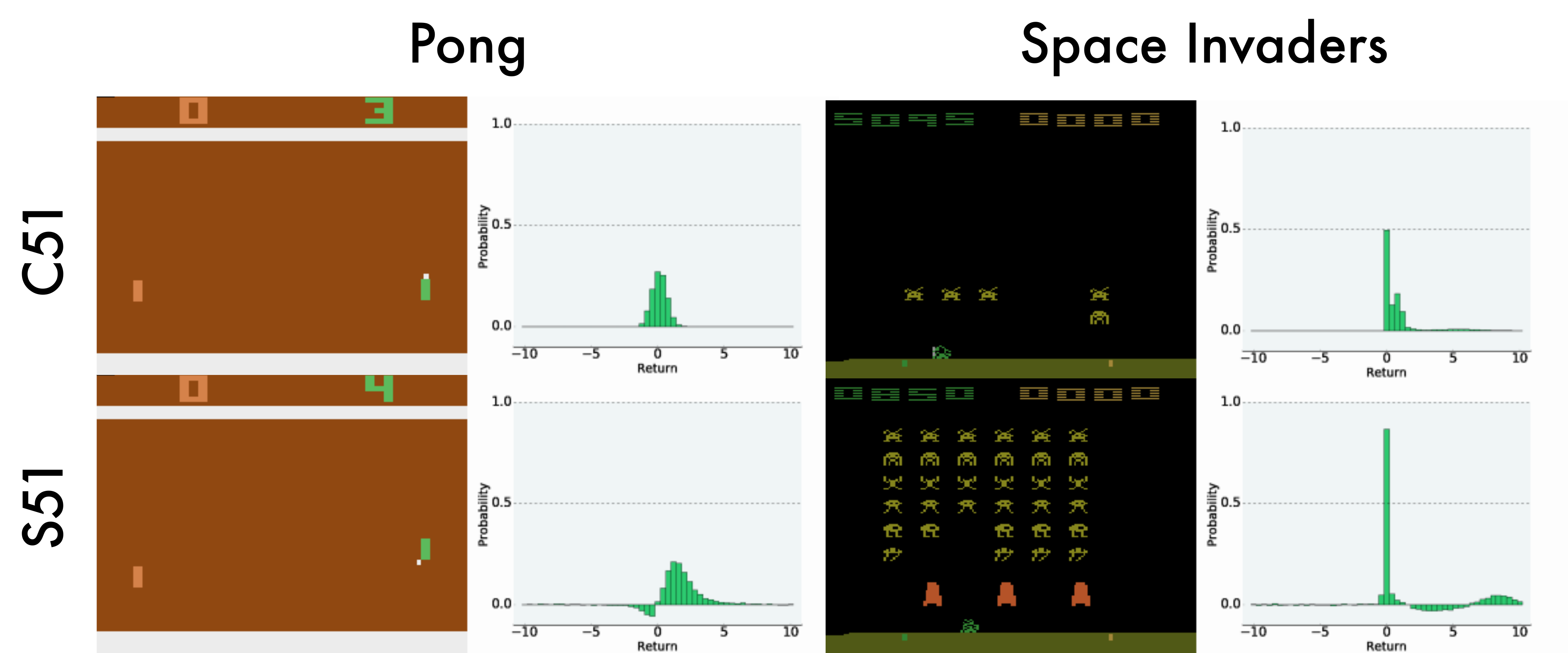}%
\caption{Distributions predicted by both algorithms in similar situations.
\label{fig:c51_s51_similar_situations}}
\end{center}
\end{figure*}

Figure \ref{fig:c51_s51_results} shows that S51 achieves higher scores than DQN, demonstrating that it maintains the empirical benefits of the distributional perspective, and performs as well as C51 in three out of five games. This is especially significant given the relative freedom of the network in outputting arbitrary vectors. Nonetheless, our results suggest that there are benefits to enforcing normalized distributions -- possibly in reducing the update variance.

To better understand the qualitative differences between the two algorithms, we studied agents playing through episodes of different games and visualized the predicted distribution for their selected actions (videos available in the supplemental; Figure \ref{fig:videos}). We find that C51 outputs value distributions which are bell-shaped and may have a separate mode at 0. In contrast, the S51 distributions are much more diverse; we highlight two interesting results:

\noindent \textbf {Double negatives}. S51 agents often assign negative mass to negative returns in games where such returns are impossible, such as \textsc{Pong} (Figure \ref{fig:c51_s51_similar_situations}, left). The total mass in these cases is still close to 1.

\noindent \textbf {Compensation around 0}. In \textsc{Space Invaders} (Figure \ref{fig:c51_s51_similar_situations}, right), the 0 return prediction is bracketed with small negative and positive corrections that cancel each other out.
One explanation is that the network compensates for its limited capacity by relying on negative return predictions. This is particularly interesting as this behaviour is not possible under published distributional algorithms.%

\noindent \textbf {Noisier predictions} (left and right). S51 assigns a small amount of probability to almost all returns. We hypothesize that this effect is visually absent from the C51 histograms because of the squashing effect of the softmax transfer function, and that this added noise explains some of the difference in performance. In particular, to generate a small probability the C51 network need only output a sufficiently negative logit; by contrast, S51 must output a value which is neither too negative nor too positive (i.e., is actually close to 0).

\section{Discussion and Conclusion}

While the convergence of the distributional approach with linear approximation may have been predictable, our proof shows that the result is not completely straightforward, and that the normalization penalty plays an important role in convergence. Because the softmax produces bounded outputs, it may still be possible to derive some convergence guarantees for it; however, it seems difficult to bound on its approximation error once we leave the convex regime of the linear outputs/squared loss combination.
Another question is whether minimizing the Cram\'er distance in the context of function approximation for optimal control somehow results in learning dynamics that are more stable than in the expected case, as a wealth of empirical results now suggest.

The Wasserstein distance also plays an important role in distributional reinforcement learning. \citet{dabney18distributional} demonstrated that one can obtain a stable distributional algorithm which minimizes the Wasserstein distance even in the approximate case by performing quantile regression rather than gradient descent on the sample Wasserstein loss. %
A similar analysis to ours may in fact prove convergence in the approximate setting; we expect that minimizing the Wasserstein metric should also be susceptible to pathological cases yielding a worse approximation of expected values.

Despite our attempts, we could not match the raw performance of C51. While this may only be a matter of hyperparameter tuning, we might have lost other properties when moving away from the KL. One might also wonder if there are other losses even more suited to the problem than our modified \cramer loss. In particular, since the ultimate goal is to preserve the expectation of the target distribution, one could adapt the loss to strengthen the link between loss minimization and expectation preservation. %

\bibliography{aistats_2018}
\bibliographystyle{icml2018}

\clearpage
\appendix
\section*{Appendix}

We now provide the proofs of our lemmas.

\subsection*{Impact on optimization speed}

To prove Lemma~\ref{lemma:cond_number}, we first need to prove an additional lemma.
\begin{restatable}[Set of \cramer extensions]{lemma}{cramerextensions}
\label{lemma:cramer_extensions}
The set $\mathcal{C}$ of symmetric matrices $M$ such that $(\p-\q)^\top M (\p-\q) = (\p-\q)^\top CC^\top (\p-\q)$ for all normalized distributions $\p$ and $\q$ is the set
\begin{align*}
\mathcal{C} &= \left\{ CC^\top + ae^\top + ea^\top | a \in \mathbb{R}^k\right\} \; .
\end{align*}
\end{restatable}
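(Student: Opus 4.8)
The plan is to reduce the quantifier "for all normalized distributions $\p,\q$" to a statement about quadratic forms restricted to the hyperplane $e^\perp$, and then solve the resulting linear-algebraic equation explicitly.

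First I would observe that if $\p$ and $\q$ are probability vectors then $e^\top(\p-\q) = \tfrac{1}{\sqrt k}\sum_i(p_i-q_i)=0$, so every difference $\p-\q$ lies in the subspace $e^\perp$. Conversely, taking $\p=\delta_i$ and $\q=\delta_j$ (standard basis vectors, which are valid probability vectors) shows the attainable differences include all $\delta_i-\delta_j$, and these span the $(k-1)$-dimensional space $e^\perp$. Hence the defining condition "$(\p-\q)^\top M(\p-\q)=(\p-\q)^\top CC^\top(\p-\q)$ for all normalized $\p,\q$" is equivalent to "$v^\top(M-CC^\top)v=0$ for all $v\in e^\perp$". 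Writing $D:=M-CC^\top$, which is symmetric since $M$ and $CC^\top$ are, and polarizing, this is in turn equivalent to $u^\top D v=0$ for all $u,v\in e^\perp$, i.e. to $\Pi_{e^\perp} D\,\Pi_{e^\perp}=0$.

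Second, I would solve $\Pi_{e^\perp} D\,\Pi_{e^\perp}=0$ for symmetric $D$. Using $\Pi_{e^\perp}=I_k-ee^\top$ and expanding $0=(I_k-ee^\top)D(I_k-ee^\top)$ gives
\[
D = ee^\top D + D\,ee^\top - e\,(e^\top D e)\,e^\top .
\]
Setting $b:=De$ (so that $e^\top D = b^\top$ by symmetry), this reads $D = e b^\top + b e^\top - (e^\top b)\,ee^\top$, and choosing $a := b - \tfrac12(e^\top b)\,e$ yields exactly $D = a e^\top + e a^\top$. This proves every $M\in\mathcal{C}$ has the claimed form. For the reverse inclusion I would just check that any $M = CC^\top + a e^\top + e a^\top$ is symmetric and that for $v\in e^\perp$ we have $v^\top(a e^\top + e a^\top)v = 2(a^\top v)(e^\top v) = 0$, so $M$ satisfies the defining identity on all differences of probability vectors.

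I do not expect a serious obstacle: the only point requiring a little care is the reduction in the first step, namely that the differences of probability vectors span all of $e^\perp$ rather than merely a proper subset — this is what makes the hypothesis exactly as strong as "$v^\top D v = 0$ on $e^\perp$", and hence makes the characterization sharp in both directions.
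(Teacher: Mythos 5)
Your overall strategy matches the paper's: reduce the hypothesis to a statement about the quadratic form of $D := M - CC^\top$ on the hyperplane $e^\perp$, then show $D$ must equal $ae^\top + ea^\top$. Your second half is in fact cleaner and more explicit than the paper's: where the paper argues somewhat informally that $D(\p-\q)$ ``must be colinear to $e$'' and then decomposes $D = ea^\top + N$ with $N = be^\top$, you solve the projector equation $\Pi_{e^\perp} D\, \Pi_{e^\perp} = 0$ in closed form, with the explicit choice $a = De - \tfrac12(e^\top De)e$. That computation checks out (note $e$ is a unit vector, so $ee^\top$ really is the orthogonal projector onto $\mathrm{span}(e)$), and your reverse inclusion is the same short verification as in the paper.

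The one step that does not hold up as written is the reduction itself. You argue that because the attainable differences $\p - \q$ include all $\delta_i - \delta_j$, which span $e^\perp$, the hypothesis is equivalent to $v^\top D v = 0$ for all $v \in e^\perp$. But a quadratic form can vanish on a spanning set without vanishing on the span (e.g.\ $q(x,y) = xy$ vanishes on $\{(1,0),(0,1)\}$), so spanning is not the right property to invoke here, and you only polarize \emph{after} this step, when it can no longer help. The fix is immediate: the set of attainable differences contains a relatively open neighborhood of $0$ in $e^\perp$ (take $\p = c+u$, $\q = c+v$ with $c$ the uniform distribution and $u,v \in e^\perp$ small), and a quadratic form vanishing on an open subset of a subspace vanishes on the whole subspace by degree-two homogeneity; equivalently, for small $u,v \in e^\perp$ the vectors $u$, $v$, and $u+v$ are all attainable differences, so you can polarize directly on the hypothesis. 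With that repair the argument is complete. (Incidentally, vanishing on the $\delta_i - \delta_j$ alone does force $D_{ij} = \tfrac12(D_{ii}+D_{jj})$, which already implies the claimed form --- but that requires a separate computation you did not carry out, so it cannot rescue the step as you stated it.)
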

\begin{proof}
Let $M = CC^\top + ae^\top + ea^\top$. Since $\p$ and $\q$ are normalized, we have $(\p-\q)^\top e = 0$. Hence,
\begin{align*}
&(\p-\q)^\top M (\p-\q) = \\
&\qquad (\p-\q)^\top CC^\top(\p-\q)\\
&\qquad + (\p-\q)^\top ae^\top(\p-\q) + (\p-\q)^\top ea^\top(\p-\q)\\
&= (\p-\q)^\top CC^\top(\p-\q) \; .
\end{align*}
  Conversely, let a symmetric matrix $M$ be such that $(\p-\q)^\top M (\p-\q) = (\p-\q)^\top CC^\top(\p-\q)$ for all normalized $\p$ and $\q$. Then $(\p-\q)^\top (M - CC^\top) (\p-\q) = 0$. For this to be true, $(M - CC^\top) (\p-\q)$ must be colinear to $e$. Thus, denoting $M - CC^\top = ea^\top + N$ where $e$ is not in the span of $N$, we must have $N(\p-\q) = 0$ for all normalized $\p$ and $\q$, i.e. $N = be^\top$. The symmetry constraint leads to $a = b$. This concludes the proof.
\end{proof}

\condnumber*
\begin{proof}
  Let $v_L$ be the vector associated with the maximum eigenvalue $L$ of $C_0$ and $a$ be an arbitrary vector. Because $C_0$ is a symmetric matrix whose only zero eigenvalue corresponds to $e$, its eigenvectors are orthogonal to $e$ and in particular $v_L^\top e = 0$. Thus, we have
\begin{align*}
L	&= v_L^\top C_0 v_L\\
	&= v_L^\top \Pi_{e^\perp} CC^\top \Pi_{e^\perp} v_L\\
	&= v_L^\top CC^\top v_L\\
	&= v_L^\top CC^\top v_L + v_L^\top ae^\top v_L + v_L^\top ea^\top v_L
\end{align*}
for any vector $a$ since $e^\top v_L = 0$.
Denoting $R_a = CC^\top + ae^\top + ea^\top$, we get
\begin{align*}
L	&= v_L^\top R_a v_L\\
	& \leq \max_{v} \frac{v^\top R_a v}{\|v\|^2} \; ,
\end{align*}
which is the largest eigenvalue of $R_a$. Since this is true for every $a$, $C_0$ has the lowest top eigenvalue from all the matrices in $\mathcal{C}$.

Similarly, let us denote $v_\mu$ be the vector associated with the second-smallest eigenvalue $\mu$~\footnote{The smallest being 0.}. As $e$ is the eigenvector associated with the eigenvalue $0$, we have that $v_\mu ^\top e = 0$ and
\begin{align*}
\mu	&= v_\mu^\top C_0 v_\mu\\
	&= v_\mu^\top CC^\top v_\mu\\
	&= v_\mu^\top CC^\top v_\mu + v_\mu^\top ae^\top v_\mu + v_\mu^\top ea^\top v_\mu\\
	&= v_\mu^\top R_a v_\mu\\
	& \geq \min_{v} \frac{v^\top R_a v}{\|v\|^2} \; .
\end{align*}
  Thus, for all $a$, the second smallest eigenvalue of $C_0$ is larger than the smallest eigenvalue of $R_a$.

This means that, for $C_\lambda$ to have the smallest condition number of all the matrices in $\mathcal{C}$, it is sufficient to require that the eigenvalue associated with $e$ be between $\mu$ and $L$, i.e. that $\mu \leq \lambda \leq L$.
This concludes the proof.
\end{proof}

\subsection*{Preservation of the expectation}

To prove Lemma~\ref{lemma:e_projection}, we will need the following proposition:
\begin{restatable}[]{proposition}{cz}
\label{prop:cz}
Let $\z$ be defined as in Section~\ref{sec:unnormalized_cramer}, i.e. $\z$ is the vector of evenly spaced returns between $-\frac{k-1}{2}$ and $\frac{k-1}{2}$ with mean $0$. Let $b = [-1, 0, 0, \ldots, 0, 0, 1]^\top$. Then $C_\lambda^{-1}\z = b$ for all values of $\lambda > 0$.
\end{restatable}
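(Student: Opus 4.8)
The plan is to prove the equivalent identity $C_\lambda b = \z$ and then invert. For the inversion step I first note that $C_\lambda$ is invertible whenever $\lambda > 0$: the matrix $\Pi_{e^\perp} CC^\top \Pi_{e^\perp}$ is positive semidefinite with null space exactly $\mathrm{span}(e)$ (the fact already invoked in the proof of Lemma~\ref{lemma:cond_number}), so $C_\lambda = \Pi_{e^\perp} CC^\top \Pi_{e^\perp} + \lambda e e^\top$ is positive definite. Hence it suffices to check $C_\lambda b = \z$, and the claim $C_\lambda^{-1}\z = b$ follows.

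The first observation is that $e^\top b = \tfrac{1}{\sqrt k}(-1 + 1) = 0$. Consequently $\Pi_{e^\perp} b = b$ and $\lambda e e^\top b = 0$, so $C_\lambda b = \Pi_{e^\perp} CC^\top b$ — in particular the value of $\lambda$ plays no role, which is why the statement holds for all $\lambda > 0$ at once. It then remains to evaluate $\Pi_{e^\perp} CC^\top b$ by two triangular matrix--vector products. Since $C^\top$ is the upper-triangular matrix of ones, $(C^\top b)_i = \sum_{j \ge i} b_j$, which is $0$ for $i = 1$ (the $-1$ in position $1$ and the $+1$ in position $k$ cancel) and $1$ for $2 \le i \le k$; writing $\mathbf{1}$ for the all-ones vector and $u_1 = [1,0,\dots,0]^\top$, this says $C^\top b = \mathbf{1} - u_1$. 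Applying $C$ and using $C\mathbf{1} = [1,2,\dots,k]^\top$ (partial sums of ones) and $Cu_1 = \mathbf{1}$ (the first column of $C$), I get $CC^\top b = [0,1,2,\dots,k-1]^\top$, i.e. $(CC^\top b)_i = i-1$.

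Finally, $\Pi_{e^\perp}$ simply subtracts the coordinate mean: $\tfrac1k\sum_{i=1}^k (i-1) = \tfrac{k-1}{2}$, so $(\Pi_{e^\perp}CC^\top b)_i = (i-1) - \tfrac{k-1}{2} = \tfrac{2i-1-k}{2} = z_i$, which is exactly the definition of $\z$ in Section~\ref{sec:unnormalized_cramer}. Thus $C_\lambda b = \z$, and inverting gives $C_\lambda^{-1}\z = b$. I expect no genuine obstacle here: the argument is pure bookkeeping of lower/upper-triangular products, and the only point that deserves an explicit sentence is the invertibility of $C_\lambda$ for $\lambda>0$, handled as above.
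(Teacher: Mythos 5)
Your proposal is correct and follows essentially the same route as the paper's proof: verify $C_\lambda b = \z$ by noting $e^\top b = 0$ (so the $\lambda$-term drops and $\Pi_{e^\perp}$ acts trivially on $b$), compute $C^\top b$ and then $CC^\top b = [0,1,\dots,k-1]^\top$, and finish by subtracting the mean $\tfrac{k-1}{2}$. Your explicit remark on the invertibility of $C_\lambda$ for $\lambda>0$ is a small but welcome addition that the paper leaves implicit.
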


\begin{proof}
We will prove that $C_\lambda b = \z$ for all values of $\lambda$. First, we note that $e^\top b = 0$ and $C_\lambda b = \Pi_{e^\perp} CC^\top b$.

Since $C_{ij} = 1_{i \geq j}$, we have, denoting $c = C^\top b$,
\begin{align*}
c_j &= \sum_i C_{ij} b_i\\
				&= \left\{\begin{array}{cl}0 & \textrm{if j = 1}\\ 1 & \textrm{otherwise}\end{array}\right. \; .
\end{align*}
Multiplying by $C$ to get $d = C c$, we get
\begin{align*}
d_i &= \sum_j C_{ij} (C^\top b)_j\\
				&= i-1\; .
\end{align*}

  We now need to compute $r = \Pi_{e^\perp} d$. Since $e^\top d = \frac{(k-1)\sqrt{k}}{2}$, we have
\begin{align*}
r_i &= d_i - V\\
	&= i - 1 - \frac{k-1}{2}\\
	&= \frac{2i-1 -k}{2}\\
	&= \z_i \; .
\end{align*}
This concludes the proof.
\end{proof}

\eprojection*
\begin{proof}
By definition, 
\begin{align*}
\Pi_{A, b}(\p) &= \begin{array}{cl}\arg \min_{\q} &(\p - \q)^\top C_\lambda (\p - \q)\\
					\textrm{subject to}& A\q = b \; .
					\end{array}
\end{align*}
Writing $\nu$ the Lagrange multipliers, this is a quadratic program whose solution is given by
\begin{align*}
\left[
\begin{array}{c}
\Pi_{A, b}(\p)\\
\nu
\end{array}
\right] &= 
\left[
\begin{array}{cc}
C_\lambda & A^\top\\
A & 0
\end{array}
\right]^{-1}
\left[
\begin{array}{c}
C_\lambda \p\\
b
\end{array}
\right] \; .
\end{align*}
Inverting the block diagonal matrix yields
\begin{align*}
\left[
\begin{array}{cc}
C_\lambda & A^\top\\
A & 0
\end{array}
\right]^{-1}
&= 
\left[
\begin{array}{cc}
M_{11} & M_{12}\\
M_{21} & M_{22}
\end{array}
\right]
\end{align*}
with
\begin{align*}
M_{11} &= C_\lambda^{-1} -  C_\lambda^{-1}A^\top (AC_\lambda^{-1}A^\top)^{-1}AC_\lambda^{-1}\\
M_{12} &= C_\lambda^{-1}A^\top (AC_\lambda^{-1}A^\top)^{-1}\\
M_{21} &= (AC_\lambda^{-1}A^\top)^{-1}AC_\lambda^{-1}\\
M_{21} &= - (AC_\lambda^{-1}A^\top)^{-1} \; .
\end{align*}
Hence,
\begin{align*}
\Pi_{A, b}(\p) &= M_{11} C_\lambda \p + M_{12} b\\
&= \p - C_\lambda^{-1}A^\top s
\end{align*}
for some $s$.
Thus, the expected $Q$-value with respect to the projected distribution is equal to
\begin{align*}
\z^\top \Pi_{A, b}(\p) &= \z^\top \p - \z^\top C_\lambda^{-1}A^\top s
\end{align*}
and the two expectations will be equal if $\z^\top C_\lambda^{-1}A^\top s = 0$.
Using Proposition~\ref{prop:cz}, we know that $C_\lambda^{-1}\z = b$. Thus, if it sufficient to have $Ab = 0$ for the two expectations to match. Since only the first and the last components of $b$ are nonzeros and they are opposite of each other, we have $Ab = 0 \Leftrightarrow A_1 = A_k$ when denoting $A_j$ the $j$-th column of $A$. This concludes the proof.
\end{proof}

\subsection*{Convergence to a fixed point}
This result requires additional definitions. A value distribution $\P$ maps states $x \in \cX$ to distributions on
$\bR$; we extend this to vectors defined by a linear combination of features:
\begin{align*}
\P(x) &= \Theta^\top \phi(x) \; ,
\end{align*}
where $\phi(x) \in \bR^m$ is the feature vector at state $x$ and $\Theta \in \R^{m \times k}$ is the parameter matrix we try to estimate.

Concatening all feature vectors into a feature matrix $\Phi \in \bR^{n \times m}$, our linear approximation is $\P_\Theta := \Phi \Theta \in \R^{n \times k}$. We assume that the vector $\P_\Theta(x) \in \R^k$ approximates a distribution over the support $\z := \{ z_1, z_2, \dots, z_k \}$, but it may have negative components and is not necessarily normalized.

We are given a distribution $\xi$ on $\cX$ and we shall use a \cramer distance between distributions over $\z$:
\begin{align*}
\ldl(\p, \q) := \norm{\p - \q}^2_{C_\lambda} \; .
\end{align*}
We transform the matrix $C_\lambda$ into an operator over continuous distributions, where with some abuse of notation we view $\p$ as a distribution over a finite set of Diracs: $\p(y) = \sum_{i} p_i \delta_{z_i = y}$. Then
{\small
\begin{align}
\Pi_{e^\perp}\p(x) &= \p(x) - \int_{y=z_1}^{z_k} \p(y) \; dy \nonumber\\
\Pi_{e^\perp}\q(x) &= \q(x) - \int_{y=z_1}^{z_k} \q(y) \; dy \nonumber\\
\ldl(\p, \q) &= \int_{x=z_1}^{z_k} \left(\int_{y=z_1}^x \left[\Pi_{e^\perp}\p(y) - \Pi_{e^\perp}\q(y)\right]\;dx\right)^2 dy\nonumber\\
&\qquad + \lambda \left(\int_{y=z_1}^{z_k} \left[\p(y) - \q(y)\right]\;dx\right)^2 \; . \label{eq:d_lambda_operator}
\end{align}
}
The first term on the right-hand side of Eq.~\eqref{eq:d_lambda_operator} penalizes the difference in cdf of $\p$ and $\q$ while the second term penalizes the difference in mass. When applied to two distributions $\p$ and $\q$ over $\z$, this is equivalent to $(\p - \q)^\top C_\lambda (\p - \q)$.
We define the weighted \cramer distance over value distributions by
\begin{equation*}
\llxi (\P, \Q) := \sum_{x \in \cX} \xi(x) \ldl(\P(x), \Q(x)) .
\end{equation*}

In what follows we identify three spaces of distributions or distribution-like objects. First, $\bP$ is the space of distributions with support the interval $[z_1, z_k)$. $\cD$ is the space of distributions over $\z$. Finally, $\cP$ is the vector space spanned by the features $\Phi \in \bR^{n \times m}$, that is: $\cP = \{ \Phi \Theta : \Theta \in \bR^{m \times k} \}$.

While our value distribution will only output distributions over the support $\z$, the distributional Bellman operator $\cTpi$ transforms distributions over $\z$ into distributions from $\bP$. We thus need to consider the projection $\Pi_{\lambda, \cP}$ which projects $\bP$ onto $\cD$:
\begin{align*}
\Pi_{\lambda, \cD}\p &= \arg \min_{\q \in \cD} \ldl(\p, \q) \; .
\end{align*}
Lemma 3 from~\citet{rowland2018} states that, for any distribution $\p \in \cD$, we have
\begin{align}
\label{eq:rowland}
\ldl(\p, \q) &= \ldl(\p, \Pi_{\lambda, \cD} \p) + \ldl(\Pi_{\lambda, \cD} \p, \q) .
\end{align}

We now move from the projection of distributions to the projection of value distributions. We define a projection in $\llxi$ of a value distribution $\Q$ onto the subspace $\mathcal{V}$ by
\begin{defn}
The $\xi$-weighted projection onto $\mathcal{V}$ is
\begin{equation*}
\PPi_{\xi, \lambda, \mathcal{V}} \P := \argmin_{\Q \in \mathcal{V}} \llxi(\P, \Q) \; ,
\end{equation*}
where both the projection and the distance are in bold to distinguish them from projection and distances in distribution space.
\end{defn}

In particular, two projections are of interest. First, we consider the set of all value distributions from $\mathcal{X}$ to distributions supported by $\z$. The projection onto this set is
\begin{align*}
[\PPi_{\xi, \lambda, \cD} \P](x) &= \Pi_{\lambda, \cD} \P(x) \; ,
\end{align*}
We are also interested in the $\xi$-weighted projection onto $\Phi$, the set of linear value distributions:
\begin{equation*}
\PPi_{\xi, \lambda, \Phi} \P := \argmin_{\Phi \Theta, \Theta \in \bR^{m\times k}} \lxi(\P, \Theta \Phi) ,
\end{equation*}

The projection $\PPi_{\xi, \lambda, \Phi}$ of the true value distribution $\Q$ gives us the closest linear value distribution according to the \cramer distance defined by $C_\lambda$.

\begin{restatable}[Projection onto $\Phi$]{lemma}{projectionphi}
Let $\P$ be an arbitrary value distribution supported on $\bP$. The $\xi$-weighted projection of $\P$ onto $\Phi$, $\PPi_{\xi, \lambda, \Phi} \P$,  is equal to the $\xi$-weighted projection of $\PPi_{\xi, \lambda, \cD} \P$.
\end{restatable}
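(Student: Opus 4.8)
The plan is to reduce the claim to the Pythagorean property of the Cram\'er projection onto $\cD$, Eq.~\eqref{eq:rowland}, after first upgrading that identity from targets lying in $\cD$ to arbitrary vectors of $\bR^k$. Concretely, I would prove that for every distribution $\p$ supported on $\bP$ and every $\q \in \bR^k$,
\begin{equation*}
\ldl(\p, \q) = \ldl(\p, \Pi_{\lambda, \cD}\p) + \ldl(\Pi_{\lambda, \cD}\p, \q).
\end{equation*}
For fixed $\p$, the map $\q \mapsto \ldl(\p,\q)$ is a convex quadratic on $\bR^k$ whose Hessian is $2 C_\lambda$ (the part quadratic in $\q$ is $\q^\top C_\lambda \q$, independent of $\p$), so it equals $\ldl(\p, \q^\star) + \norm{\q - \q^\star}_{C_\lambda}^2$, where $\q^\star$ is its unconstrained minimizer over $\bR^k$. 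Since $\norm{\q - \Pi_{\lambda,\cD}\p}_{C_\lambda}^2 = \ldl(\Pi_{\lambda,\cD}\p, \q)$, the displayed identity is therefore equivalent to the single claim $\q^\star = \Pi_{\lambda, \cD}\p$.

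To prove $\q^\star = \Pi_{\lambda, \cD}\p$ I would use Eq.~\eqref{eq:rowland} directly: writing it for two targets $\q_1, \q_2 \in \cD$ and subtracting (and using the quadratic structure above for the left-hand sides) gives $(\q_1-\q_2)^\top C_\lambda (\q^\star - \Pi_{\lambda, \cD}\p) = 0$; as $\q_1, \q_2$ range over $\cD$, their differences span the direction space $e^\perp$ of the simplex, so $C_\lambda(\q^\star - \Pi_{\lambda, \cD}\p)$ is a multiple of $e$, hence $\q^\star - \Pi_{\lambda, \cD}\p$ is a multiple of $C_\lambda^{-1} e = \lambda^{-1} e$ (using $C_\lambda e = \lambda e$, which follows from $\Pi_{e^\perp} e = 0$ and $\norm{e} = 1$). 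Finally, $\p$ has total mass $1$, $\Pi_{\lambda, \cD}\p \in \cD$ has total mass $1$, and the mass-penalty term of $\ldl(\p,\cdot)$, which is $\propto(e^\top(\p-\q))^2$ and decouples from the remainder, forces $e^\top \q^\star = e^\top \p = e^\top \Pi_{\lambda,\cD}\p = 1/\sqrt{k}$; hence the multiple is $0$. With the identity in hand, I would instantiate it at each state with $\p = \P(x)$ and $\q = \Theta^\top\phi(x)$, weight by $\xi(x)$ and sum, using $[\PPi_{\xi, \lambda, \cD}\P](x) = \Pi_{\lambda, \cD}\P(x)$, to obtain $\llxi(\P, \Phi\Theta) = \llxi(\P, \PPi_{\xi, \lambda, \cD}\P) + \llxi(\PPi_{\xi, \lambda, \cD}\P, \Phi\Theta)$ for all $\Theta$; the first term does not depend on $\Theta$, so the two sides have the same set of minimizers in $\Theta$, which is exactly $\PPi_{\xi, \lambda, \Phi}\P = \PPi_{\xi, \lambda, \Phi}(\PPi_{\xi, \lambda, \cD}\P)$.

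The step I expect to be the main obstacle is precisely this upgrade of Eq.~\eqref{eq:rowland} away from the simplex: the linear outputs $\Phi\Theta$ need not be probability vectors, so Rowland et al.'s orthogonality cannot be invoked for them verbatim. What makes it go through is the combination of two facts — the residual $\P(x) - \Pi_{\lambda, \cD}\P(x)$ has zero total mass (the Cram\'er projection onto $\cD$ preserves mass), and $e$ is an eigenvector of $C_\lambda$ — which together ensure that the component of any target vector along the ``mass'' direction $e$ contributes nothing to the relevant cross term; the rest is routine quadratic algebra. For $\lambda = 0$ the matrix $C_0$ is singular and $\q^\star$, hence both projections, is determined only up to the $e$-direction, which is consistent with the set $S$ appearing in Theorem~\ref{theorem:convergencedistrib}; the argument is otherwise unchanged.
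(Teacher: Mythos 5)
Your proof is correct and follows the same route as the paper's: apply the Pythagorean identity of the Cram\'er projection onto $\cD$ state by state, sum with the $\xi$-weights, and note that the term not depending on $\Theta$ drops out of the minimization, so the two objectives share their minimizers. The paper's own proof simply invokes Eq.~\eqref{eq:rowland} with $\q = \Phi\Theta(x)$ even though that vector need not lie in $\cD$; your extension of the identity to arbitrary $\q \in \bR^k$ (via mass preservation of $\Pi_{\lambda,\cD}$ and $C_\lambda e = \lambda e$) supplies exactly the justification the paper leaves implicit, and your remark about the $\lambda = 0$ degeneracy along $e$ is consistent with the set $S$ in Theorem~\ref{theorem:convergencedistrib}.
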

The above lemma will let us restrict our attention to distributions on $\z$, that is $\P \in \cD$.
\begin{proof}
Fix $\Q := \Phi \Theta$. By definition, the support of $\Q$ is $\cP$. Now
\begin{align*}
\llxi(\P, \Q) &= \sum_{x \in \cX} \xi(x) \ldt(\P(x), \Q(x)) \\
&= \sum_{x \in \cX} \xi(x)\ldt(\Q(x), \Pi_{\lambda,\cD} \P(x))\\
&\quad + \sum_{x \in \cX} \xi(x)\ldt(\Pi_{\lambda,\cD} \P(x), \P(x)) \\
&= \lxi(\Q, \Pi_{\lambda,\cD} \P) + \\
  &\qquad \sum_{x \in \cX} \xi(x) \ldt(\Pi_{\lambda,\cD} \P(x), \P(x)),
\end{align*}
using Eq.~\ref{eq:rowland}. From the above we deduce that the matrix $\Theta$ which minimizes
$\lxi(\Phi \Theta, \P)$ is also the minimizer of $\lxi(\Phi \Theta, \Pi_{\lambda,\cD} \P)$.
\end{proof}

\begin{restatable}[$\PPi_{\xi, \lambda, \Phi}$ is a non-expansion]{lemma}{contraction}
$\lPPi$ is a non-expansion in $\llxi$, i.e. for every pair $(\P, \Q)$ of value distributions, we have
\begin{align*}
\llxi(\PPi_{\xi, \lambda, \Phi} \P, \PPi_{\xi, \lambda, \Phi} \Q) &\le \llxi (\P, \Q) \; .
\end{align*}
\begin{proof}
We can view $\llxi$ as a weighted $L_2$ norm over vectors in $\bR^{n \times k}$, with $\lPPi$ the corresponding projection onto the affine subspace spanned by $\Phi$. The result is standard from these observations.
\end{proof}
\end{restatable}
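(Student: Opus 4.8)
The plan is to recognize $\llxi$ as the square of a Euclidean (semi-)norm on the finite-dimensional space $\bR^{n \times k}$ and $\lPPi$ as the orthogonal projection, in that (semi-)norm, onto the linear subspace $\cP = \{\Phi \Theta : \Theta \in \bR^{m \times k}\}$; the claim is then just the Pythagorean identity. First I would stack the rows $\P(x)$ of a value distribution into a single vector and write
\begin{equation*}
\llxi(\P, \Q) = \sum_{x \in \cX} \xi(x)\,(\P(x) - \Q(x))^\top C_\lambda (\P(x) - \Q(x)) = \langle \P - \Q,\, \P - \Q \rangle_W ,
\end{equation*}
where $W$ is block-diagonal with blocks $\xi(x) C_\lambda$. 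Since $C_\lambda = \Pi_{e^\perp} CC^\top \Pi_{e^\perp} + \lambda ee^\top$ is a sum of positive semidefinite matrices, $W$ is positive semidefinite (and positive definite when $\lambda > 0$ and $\xi$ has full support), so $\langle \cdot, \cdot \rangle_W$ is a genuine semi-inner product with associated semi-norm $\norm{v}_W := \langle v, v\rangle_W^{1/2}$.

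Next I would check that $\lPPi$ is exactly the $W$-orthogonal projection of $\bR^{n\times k}$ onto $\cP$. By definition $\lPPi \P = \Phi \Theta^\star$ with $\Theta^\star \in \argmin_{\Theta} \llxi(\P, \Phi \Theta)$; the objective is a convex quadratic in $\Theta$, so $\Theta^\star$ is optimal iff every directional derivative vanishes, which upon a short computation reads $\langle \P - \Phi \Theta^\star,\, \Phi H \rangle_W = 0$ for all $H \in \bR^{m \times k}$, i.e. $\P - \lPPi \P$ is $W$-orthogonal to $\cP$. (When $\lambda > 0$ the minimiser is unique and $\lPPi$ is a linear map; when $\lambda = 0$ uniqueness may fail, but any measurable selection — e.g. the one through the Moore--Penrose pseudoinverse — is linear and still satisfies this orthogonality, and in any case Theorem~\ref{theorem:convergencedistrib} invokes the lemma only for $\lambda > 0$.) Since $\lPPi$ is linear, $\lPPi \P - \lPPi \Q = \lPPi(\P - \Q)$; writing $z := \P - \Q$, the pieces $\lPPi z \in \cP$ and $z - \lPPi z = (\P - \lPPi \P) - (\Q - \lPPi \Q)$ — the latter $W$-orthogonal to $\cP$ as a difference of two such vectors — are mutually $W$-orthogonal, so
\begin{equation*}
\llxi(\P, \Q) = \norm{z}_W^2 = \norm{\lPPi z}_W^2 + \norm{z - \lPPi z}_W^2 \ge \norm{\lPPi z}_W^2 = \llxi(\lPPi \P, \lPPi \Q) ,
\end{equation*}
which is the claimed non-expansion.

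There is no deep obstacle here: the statement is the textbook fact that orthogonal projection onto a subspace is a non-expansion, and the only thing demanding a little care is that $C_\lambda$ — hence $W$ — is merely positive semidefinite when $\lambda = 0$, so one must either phrase everything in the semi-inner-product setting (the Pythagorean argument above is unchanged once the orthogonality relation is in hand) or restrict to $\lambda > 0$. A minor bookkeeping point is to make explicit the identification of the distribution-like objects $\P = \Phi \Theta$ with vectors in $\bR^{nk}$ and of $\{\Phi\Theta : \Theta \in \bR^{m\times k}\}$ with a linear subspace (it contains $\zerovec$, so it is linear, not merely affine), so that $\lPPi$ is genuinely linear and the decomposition $z = \lPPi z + (z - \lPPi z)$ is legitimate.
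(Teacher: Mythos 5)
Your proof is correct and is exactly the paper's argument, fully worked out: you view $\llxi$ as a weighted squared (semi-)norm with block-diagonal weight $W$, identify $\lPPi$ as the $W$-orthogonal projection onto the subspace $\cP$, and conclude by the Pythagorean identity. The extra care you take with the $\lambda = 0$ semidefinite case and the observation that $\cP$ is a linear (not merely affine) subspace are both sound refinements of the paper's one-line ``the result is standard'' proof.
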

Recall that the loss $\ldl$ between vectors is defined through the matrix $C_\lambda = \Pi_{e^\perp} CC^\top \Pi_{e^\perp} + \lambda e e^\top$: $\ldl(\p, \q) = (\p - \q)^\top C_\lambda (\p - \q) = \norm{\p - \q}^2_{C_\lambda}$. To prove Theorem \ref{theorem:convergencedistrib} we will consider two separate components of that loss: along $e$ and along the subspace orthogonal to $e$. That is, let us write
\begin{equation*}
A := \Pi_{e^\perp} C,
\end{equation*}
such that
\begin{equation*}
\ldl(\p, \q) = \norm{\p - \q}^2_{\Aat} + \lambda \norm{\p - \q}^2_{\eet}.
\end{equation*}
We extend this notation to a $\xi$-weighted norm over value distributions. For a matrix $B \in \bR^{k \times k}$ and $\Delta \in \bR^{n \times k}$ write
\begin{equation*}
\norm{\Delta}^2_{\xi, B} = \sum_{x \in \cX} \xi(x) \norm{\Delta(x)}^2_B,
\end{equation*}
where we associate each state $x \in \cX$ with an integer in $\{1, \dots, n\}$. Then:
\begin{equation*}
\llxi(\P, \Q) = \norm{\P - \Q}^2_{\xi, \Aat} + \lambda \norm{\P - \Q}^2_{\xi, \eet} .
\end{equation*}

\xiweighted*
Lemma \ref{lemma:xiweighted} states that the distributional Bellman operator, applied over distributions in $\bR^{n \times k}$, contracts all dimensions orthogonal to $e$ by a factor $\gamma^{1/2}$ but is only a nonexpansion along $e$.
\begin{proof}
Let $\P, \Q$ be two value distributions. To keep the notation light, without loss of generality let $\lambda = 1$. We begin with the term in $\eet$:
\begin{align*}
&\norm{\cTpi' \P - \cTpi' \Q}^2_{\xi, \eet}\\
&\qquad = \sum_{x \in \cX} \xi(x) \norm{\cTpi' \P(x) - \cTpi' \Q(x)}^2_{\eet} \\
&\qquad = \sum_{x \in \cX} \xi(x) \norm{e^\top \cTpi' \P(x) - e^\top \cTpi' \Q(x)}^2 .
\end{align*}
The term $e^\top \cTpi' \P(x)$ measures the total mass at $x$ (up to a multiplicative constant $\sqrt{k}$), after applying the distributional Bellman operator $\cTpi$ and projecting onto the finite support. $\cTpi \P(x)$ consists of a mixture of next-state distributions, shifted by the reward $r(x)$ and scaled by the discount factor $\gamma$. However, neither of these two operations affects the mass of the distributions. Furthermore, the Cram\'er projection onto the support also preserves mass \citep{rowland2018}. Hence
\begin{equation*}
e^\top \cTpi' \P(x) = \sum_{x' \in \cX} \Ppi(x' \cbar x) e^\top \P(x') .
\end{equation*}
And therefore
\begin{align*}
&\sum_{x \in \cX} \xi(x) \norm{\cTpi' \P(x) - \cTpi' \Q(x)}^2_{\eet}\\
&= \sum_{x \in \cX} \xi(x) \left(\sum_{x' \in \cX} \Ppi(x' \cbar x) e^\top \P(x') - \right. \\
  & \qquad\qquad\qquad\left. \Ppi(x' \cbar x) e^\top \Q(x')\right)^2 \\
&= \sum_{x \in \cX} \xi(x) \left(\sum_{x' \in \cX} \Ppi(x' \cbar x) e^\top (\P(x') - \Q(x'))\right)^2 .
\end{align*}
Now, by Jensen's inequality and the fact that $\xi(x') = \sum\nolimits_{x} \xi(x) \Ppi(x' \cbar x)$,
\begin{align*}
&\sum_{x \in \cX} \xi(x) \big(\sum_{x' \in \cX} \Ppi(x' \cbar x) e^\top (\P(x') - \Q(x'))\big)^2\\
&\qquad \le \sum_{x \in \cX} \xi(x) \sum_{x' \in \cX} \Ppi(x' \cbar x) (e^\top (\P(x') - \Q(x')))^2 \\
&\qquad = \sum_{x' \in \cX} \xi(x') \Ppi(x' \cbar x) (e^\top (\P(x') - \Q(x')))^2 \\
&\qquad = \norm{\P - \Q}^2_{\xi, \eet} .
\end{align*}
This proves the second statement. For the first, notice that we can add any constant vector $\alpha(x) e$ to the distribution at each state, without changing the $\Aat$-distance between them:
\begin{align*}
\norm{\cTpi' \P - \cTpi' \Q}^2_{\xi, \Aat} &= \norm{\cTpi' (\P + \alpha e) - \cTpi' \Q}^2_{\xi, \Aat} .
\end{align*}
In particular, we can choose $\alpha e$ so that the two value distributions have equal mass at all states (and in fact, sum to 1 at all states, by also changing $\Q$). In turn we can modify results by \citet{bellemare2017cramer} and \citet{rowland2018} showing that the distributional Bellman operator, projected onto a finite support or not, is a $\gamma^{1/2}$ contraction in Cram\'er metric, extending it as above to deal with the $\xi$-weighted norm rather than the maximal norm. We conclude that
\begin{equation*}
\norm{\cTpi' \P - \cTpi' \Q}^2_{\xi, \Aat} \le \gamma \norm{\P - \Q}^2_{\xi, \Aat} . \qedhere
\end{equation*}
\end{proof}

\convergencedistrib*
\begin{proof}[Proof (Sketch)]
To prove the theorem, we cannot make direct use of the usual techniques e.g. from \citet{tsitsiklis97analysis}. First, the operator $\nPPi$ is not a projection operator when $\lambda > 0$, because of the normalization term $\lambda (\q^\top e - 1)^2$ (Equation \ref{eq:unit_norm}). Second, the Bellman operator is not a contraction when applied to distributions with varying mass.

Let us consider two process $\P_{k+1} = \nPPi \cTpi \P_k$ and $\Q_k = \nPPi \cTpi Q_k$, possibly with different initial conditions. We make use of the following fact:
\begin{equation*}
\nPPi \cTpi \P = \lPPi \tilde \cTpi \P,
\end{equation*}
where $\tilde \cTpi \P = \Pi_{e^\perp} \cTpi \P + \frac{e}{\sqrt{k}}$ is a modification of the distributional Bellman operator which ``resets'' the mass of the resulting distribution to 1 by adding the appropriate constant vector (recall $e = [ 1/\sqrt{k}, \dots, 1/\sqrt{k} ]^\top$). We use this fact to measure how the two processes evolve under the norm $\norm{\cdot}_{\xi, C_\lambda}$:

\begin{align*}
\norm{\P_{k+1} - \Q_{k+1}}^2_{\xi, C_\lambda} &= \\
  &\hspace{-8em} = \norm{\nPPi \cTpi \P_k - \nPPi \cTpi \Q_{k}}^2_{\xi, C_\lambda} \\
&\hspace{-8em} = \norm{\lPPi \tilde \cTpi \P_k - \lPPi \tilde \cTpi \Q_k}^2_{\xi, C_\lambda} \\
&\hspace{-8em} \le \norm{\tilde \cTpi \P_k - \tilde \cTpi \Q_k}^2_{\xi, C_\lambda} \\
&\hspace{-8em} = \norm{\tilde \cTpi \P_k - \tilde \cTpi \Q_k}^2_{\xi, \Aat} + \norm{\tilde \cTpi \P_k - \tilde \cTpi \Q_k}^2_{\xi, \eet} \\
&\hspace{-8em} = \norm{\cTpi \P_k - \cTpi \Q_k}^2_{\xi, \Aat} + \norm{\cTpi \P_k - \cTpi \Q_k}^2_{\xi, \eet},
\end{align*}
where the last line follows from the fact that the addition of the constant $e / \sqrt{k}$ does not impact either term. Furthermore, 
\begin{align*}
\norm{\tilde \cTpi \P_k - \tilde \cTpi \Q_k}^2_{\xi, \eet} &= \norm{\Pi_{e^\perp} \P_k - \Pi_{e^\perp} \Q_k}^2_{\xi, \eet} \\
&= 0 .
\end{align*}
It follows from Lemma \ref{lemma:xiweighted} that
\begin{align*}
\norm{\P_{k+1} - \Q_{k+1}}^2_{\xi, C_\lambda} &\le \norm{\tilde \cTpi \P_k - \tilde \cTpi \Q_k}^2_{\xi, \Aat} \\
&\le \gamma \norm{\P_k - \Q_k}^2_{\xi, \Aat} \\
&\le \gamma \norm{\P_k - \Q_k}^2_{\xi, C_\lambda} .
\end{align*}
Now if $\lambda > 0$, the norm $\norm{\cdot}_{\xi, C_\lambda}$ is a true norm and
\begin{equation*}
\norm{\P_k - \Q_k}^2_{\xi, C_\lambda} \to 0 \implies \P_k, \Q_k \to \tilde \P .
\end{equation*}
When $\lambda = 0$ we have no guarantees on what happens to the $e$ component of either $\P_k$ or $\Q_k$, and we can only say that $\P_k$ (resp., $\Q_k$) converges to a set $S$ whose elements differ by a constant component.

Using a variation on a standard argument \citep{tsitsiklis97analysis}, we now write (in $\xi$-weighted norm)
\begin{align*}
\llxi(\tilde \P, \P^\pi) &= \llxi(\nPPi \cTpi \tilde \P, \P^\pi) \tag{By definition of $\tilde \P$}\\
&= \llxi(\lPPi \tilde \cTpi \tilde \P, \P^\pi) \\
&= \llxi(\lPPi \tilde \cTpi \tilde \P, \lPPi \P^\pi)\\
&\qquad + \llxi(\lPPi \P^\pi, \P^\pi) \tag{Using Eq.~\eqref{eq:rowland}}\\
&= \llxi(\lPPi \tilde \cTpi \tilde \P, \lPPi \cTpi \P^\pi)\\
&\qquad + \llxi(\lPPi \P^\pi, \P^\pi) \tag{$\P^\pi$ is the fixed point of $\cTpi$} \\
&\le \llxi(\tilde \cTpi \tilde \P, \cTpi \P^\pi) + \llxi(\lPPi \P^\pi, \P^\pi) .
\end{align*}
We now focus on the first term. Unlike \citet{tsitsiklis97analysis}'s argument, we are faced here with two different operators: $\tilde \cTpi$ and $\cTpi$. We write
\begin{align*}
\llxi(\tilde \cTpi \tilde \P, \cTpi \P^\pi) &= \norm{\tilde \cTpi \tilde \P - \cTpi \P^\pi}^2_{\xi, C_\lambda} \\
&= \norm{\tilde \cTpi \tilde \P - \cTpi \P^\pi}^2_{\xi, \Aat}\\
&\qquad + \lambda \norm{\tilde \cTpi \tilde \P - \cTpi \P^\pi}^2_{\xi, \eet} .
\end{align*}
Because $\tilde \cTpi$ ``resets'' the distribution's mass to 1, the second term is zero. Similarly,
\begin{align*}
\norm{\tilde \cTpi \tilde \P - \cTpi \P^\pi}^2_{\xi, \Aat} &= \norm{\cTpi \tilde \P - \cTpi \P^\pi}^2_{\xi, \Aat} \\
&\le \gamma \norm{\tilde \P - \P^\pi}^2_{\xi, \Aat} \\
&= \gamma \norm{\tilde \P - \P^\pi}^2_{\xi, C_\lambda}\\
&\qquad - \gamma \lambda \norm{\tilde \P - \P^\pi}^2_{\xi, \eet} .
\end{align*}
Expanding the first inequality repeatedly, we put everything together and find that
\begin{align*}
\llxi(\tilde \P, \P^\pi) &\le \frac{1}{1-\gamma}\norm{\lPPi \P^\pi - \P^\pi}^2_{\xi, C_\lambda}\\
&\qquad - \frac{\gamma \lambda}{1 - \gamma} \norm{\tilde \P - \P^\pi}^2_{\xi, \eet} . \qedhere
\end{align*}
\end{proof}

\begin{cor}
Under the same conditions as those used by~\citet{tsitsiklis97analysis}, the stochastic update process where one samples $x \sim \xi$ and updates the parameter $\Theta$ according to
\begin{equation*}
\Theta_{k+1} \gets \Theta_k + \alpha_k \grad_\Theta \ldt(\hat \cTpi \P_k(x), \P_k(x)) \; ,
\end{equation*}
where $\hat \cTpi$ is the random operator derived from a sample transition $(x, r, x')$, also converges.
\end{cor}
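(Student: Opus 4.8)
The plan is to treat the update as a linear stochastic approximation recursion and to reuse the deterministic contraction established for Theorem~\ref{theorem:convergencedistrib}, in the spirit of the convergence proof for TD with linear function approximation in \citet{tsitsiklis97analysis}. First I would write the recursion as $\Theta_{k+1} = \Theta_k + \alpha_k ( h(\Theta_k) + M_{k+1} )$, where, with $\P_k := \Phi\Theta_k$, the mean field is $h(\Theta) := \mathbb{E}_{x\sim\xi,\,(r,x')}[\, \grad_\Theta \ldl(\hat\cTpi\P_k(x), \P_k(x)) \,]$ and $M_{k+1}$ is the conditionally mean-zero sampling noise. Since $\ldl$ is a quadratic form in $C_\lambda$, its gradient is affine in the Bellman target; together with the unbiasedness $\mathbb{E}[\hat\cTpi\P_k(x)] = \cTpi\P_k(x)$ of the sampled transition, and the observation from Section~\ref{sec:approx} that composing $\cTpi$ with the projection onto $\z$ does not change the gradient, the mean field $h$ is affine in $\Theta$ and, up to a positive constant, points along the deterministic increment $\nPPi\cTpi\P_k - \P_k$ of Theorem~\ref{theorem:convergencedistrib}. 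Its zero set is therefore the fixed point $\tilde\P = \Phi\Theta^*$ when $\lambda > 0$, and the set $S$ when $\lambda = 0$.

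Next I would supply a Lyapunov function. Set $V(\Theta) := \norm{\Phi\Theta - \tilde\P}^2_{\xi, C_\lambda}$. The chain of inequalities in the proof of Theorem~\ref{theorem:convergencedistrib} --- rewriting $\nPPi\cTpi = \lPPi\tilde\cTpi$, using that $\lPPi$ is a nonexpansion in $\llxi$, and invoking Lemma~\ref{lemma:xiweighted} --- shows that $\nPPi\cTpi$ is a $\sqrt\gamma$-contraction in $\norm{\cdot}_{\xi, C_\lambda}$ when $\lambda > 0$. A short computation then yields $\langle \grad V(\Theta), h(\Theta) \rangle \le -c\,(1-\sqrt\gamma)\,V(\Theta)$ for some $c > 0$, so that $\Theta^*$ is the unique, globally asymptotically stable equilibrium of the associated ODE $\dot\Theta = h(\Theta)$. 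For $\lambda = 0$ the same computation yields a negative-semidefinite mean field whose null space consists exactly of the $e$-directions, and the ODE converges to $S$.

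The remaining steps are routine verifications of the hypotheses of a standard stochastic approximation theorem (e.g.\ the linear recursion results used by \citet{tsitsiklis97analysis}, or those in \citet{bertsekas96neurodynamic}): the stepsizes are the usual ones, $\sum_k \alpha_k = \infty$ and $\sum_k \alpha_k^2 < \infty$; boundedness of the rewards and of the features gives $\mathbb{E}[\,\norm{M_{k+1}}^2 \mid \mathcal{F}_k\,] \le K(1 + \norm{\Theta_k}^2)$; and the affine form of $h$ with stable drift gives almost-sure boundedness of the iterates, or one may project onto a large ball. Assembling these ingredients gives $\Theta_k \to \Theta^*$ almost surely, hence $\P_k \to \tilde\P$ when $\lambda > 0$, and convergence to $S$ when $\lambda = 0$.

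The step I expect to be the main obstacle is the Lyapunov estimate of the second paragraph, which is where the peculiarities of the distributional setting bite: $\cTpi$ is only a nonexpansion along the mass direction $e$ (Lemma~\ref{lemma:xiweighted}) and $\nPPi$ is not an orthogonal projection, so a generic projected fixed-point argument does not apply and the bound must be routed through the $\lPPi\tilde\cTpi$ reformulation exactly as in the proof of Theorem~\ref{theorem:convergencedistrib}; establishing almost-sure boundedness of the iterates in the $\lambda = 0$ case, where the drift is only semidefinite, requires similar care.
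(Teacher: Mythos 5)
The paper states this corollary without proof---it is asserted to follow from Theorem~\ref{theorem:convergencedistrib} together with the standard stochastic-approximation machinery of \citet{tsitsiklis97analysis}, which is exactly the reduction your sketch carries out (the drift is affine and unbiased because the loss is quadratic in the sampled target, its zero set is the fixed point of $\nPPi \cTpi$, and the negative-drift Lyapunov estimate is inherited from the $\sqrt{\gamma}$-contraction established in the theorem's proof). Your outline is therefore the intended argument; the only loose points are that the mean field equals the projected increment up to a positive-definite linear map (as in TD(0) with linear function approximation), not a positive scalar, and that the deferred Lyapunov inequality is precisely where that distinction, and the asymmetry of $\hldl$ versus $\ldl$, must be handled.
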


To prove Theorem ~\ref{theorem:convergenceexpectation}, we will need the following result:
\begin{restatable}[Ratio of operators]{lemma}{ratiooperators}
\label{lemma:ratiooperators}
Let $M$ be a self-adjoint linear operator and $N$ be a self-adjoint, invertible linear operator. Then
\begin{align*}
\sup_f \frac{<f, Mf>}{<f, Nf>} &= \rho\left(N^{-1/2}MN^{-1/2}\right) \;,
\end{align*}
where $\rho(\cdot)$ denotes the spectral radius of its argument.
\end{restatable}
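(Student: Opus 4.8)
The plan is to reduce the generalized Rayleigh quotient on the left to an ordinary Rayleigh quotient by a change of variables, then read off the answer from the spectral theorem, with one positivity observation at the end.

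First I would use that $N$ is self-adjoint and invertible — and positive definite in the cases of interest, so that $N^{1/2}$ and $N^{-1/2}$ are well-defined self-adjoint invertible operators. The substitution $g := N^{1/2} f$ is a bijection of the space onto itself once we restrict to $f$ with $\langle f, N f\rangle \ne 0$ (i.e.\ $f \ne 0$), and it yields $\langle f, N f\rangle = \langle g, g\rangle = \norm{g}^2$ and, using self-adjointness of $N^{-1/2}$, $\langle f, M f\rangle = \langle N^{-1/2} g,\, M N^{-1/2} g\rangle = \langle g,\, N^{-1/2} M N^{-1/2} g\rangle$. Writing $\tilde M := N^{-1/2} M N^{-1/2}$, which is self-adjoint since $M$ is, this gives
\[
\sup_f \frac{\langle f, M f\rangle}{\langle f, N f\rangle} \;=\; \sup_{g \ne 0} \frac{\langle g, \tilde M g\rangle}{\norm{g}^2} \;=\; \lambda_{\max}(\tilde M),
\]
the last equality being the standard Rayleigh-quotient characterization of the largest eigenvalue (top of the spectrum) of a self-adjoint operator.

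It then remains to identify $\lambda_{\max}(\tilde M)$ with the spectral radius $\rho(\tilde M) = \rho(N^{-1/2} M N^{-1/2})$. This is the step where one uses that in every application of the lemma $M$ is positive semidefinite — for Theorem~\ref{theorem:convergenceexpectation} it is the rank-one Gram matrix $M = \z \z^\top$. Indeed, if $M$ is positive semidefinite then so is $\tilde M = (N^{-1/2})^\top M\, N^{-1/2}$, hence all of its eigenvalues are nonnegative and the largest one is precisely the eigenvalue of largest absolute value, i.e.\ $\lambda_{\max}(\tilde M) = \rho(\tilde M)$. Chaining the two displays completes the proof; in the application to Theorem~\ref{theorem:convergenceexpectation} one moreover notes that the single nonzero eigenvalue of $C_\lambda^{-1/2}\z\z^\top C_\lambda^{-1/2}$ equals its trace $\norm{C_\lambda^{-1/2}\z}^2$.

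The argument is essentially routine; the only step that genuinely requires attention is the last one. Without positive-semidefiniteness of $M$ (equivalently of $\tilde M$) one would only obtain $\sup_f \langle f, M f\rangle / \langle f, N f\rangle = \lambda_{\max}(\tilde M)$, which in general can be strictly smaller than $\rho(\tilde M)$ (e.g.\ $M = -I$, $N = I$). I would therefore either carry the standing hypothesis that $M$ is positive semidefinite in the statement, or simply remark in the proof that this condition is met wherever the lemma is invoked.
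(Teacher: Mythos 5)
Your proof is correct and follows essentially the same route as the paper's own: the substitution $g = N^{1/2}f$ reduces the generalized Rayleigh quotient to an ordinary one, and the paper's proof simply stops at ``taking the supremum over $g$,'' leaving the identification of that supremum with the spectral radius implicit. Your two added observations are both correct and fill real gaps in the stated hypotheses --- $N$ must be positive definite (not merely self-adjoint and invertible) for $N^{1/2}$ to exist, and $M$ must be positive semidefinite for $\lambda_{\max}(N^{-1/2}MN^{-1/2})$ to coincide with the spectral radius (your $M=-I$, $N=I$ counterexample is valid) --- and both conditions do hold in the lemma's only application, where $N = C_\lambda$ with $\lambda>0$ and $M = \z\z^\top$.
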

\begin{proof}
Denoting $g = N^{1/2}f$, we have
\begin{align*}
f &= N^{-1/2}g\\
\frac{<f, Mf>}{<f, Nf>} &= \frac{<N^{-1/2}g, MN^{-1/2}g>}{<g, g>}\\
&= \frac{<g, N^{-1/2}MN^{-1/2}g>}{<g, g>} \; .
\end{align*}
Taking the supremum over $g$ gives the desired result.
\end{proof}

\convergenceexpectation*
\begin{proof}
\begin{align*}
  &\norm{\expects_{\tilde \P} \z - V^\pi}_{\xi}^2 = \\
&\qquad= \sum_{x \in \cX}\xi(x) \langle \tilde \P(x) - \P^\pi(x), \z\z^* \big(\tilde \P(x) - \P^\pi(x)\big)\rangle\\
&\qquad= \sum_{x \in \cX}\xi(x) \langle\tilde \P(x) - \P^\pi(x), C_\lambda \big(\tilde \P(x) - \P^\pi(x)\big)\rangle\\
&\qquad\qquad \times \frac{\langle\tilde \P(x) - \P^\pi(x), \z\z^* \big(\tilde \P(x) - \P^\pi(x)\big)\rangle}{\langle\tilde \P(x) - \P^\pi(x), C_\lambda \big(\tilde \P(x) - \P^\pi(x)\big)\rangle}\\
&\qquad\le \sum_{x \in \cX}\xi(x) \langle\tilde \P(x) - \P^\pi(x), C_\lambda \big(\tilde \P(x) - \P^\pi(x)\big)\rangle\\
&\qquad\qquad \times\max_{f} \frac{\langle f, \z\z^* f\rangle}{\langle f, C_\lambda f\rangle}\\
&\qquad\overset{(a)}{=} \sum_{x \in \cX}\xi(x) \langle\tilde \P(x) - \P^\pi(x), C_\lambda \big(\tilde \P(x) - \P^\pi(x)\big)\rangle\\
&\qquad\qquad \times \|C_\lambda^{-1/2}\z\|^2\\
&\qquad= \|C_\lambda^{-1/2}\z\|^2 \llxi(\lPPi^\pi, \P^\pi) \;
\end{align*}
where the step a) uses Lemma~\ref{lemma:ratiooperators} and the fact that $A^{-1/2}\z\z^* A^{-1/2}$ is a rank one operator.
\end{proof}

\section{Experimental Details}\label{sec:experimental_details}

Our S51 implementation is based on the C51 code from the Dopamine framework \citet{castro18dopamine}, with only minor modifications to account for the new loss. Specifically, we
\begin{enumerate}
	\item Remove the softmax transfer function mapping logits to probabilities; our network's outputs $o(x, a)$ are directly used as ``probabilities'';
	\item Select actions according to the maximum predicted ``expectation'', which is $\z^\top o(x,a)$, where $\z$ is a 51-dimensional vector whose entries are uniformly spaced within $[-10, 10]$;
	\item Replace the cross-entropy loss by the modified squared loss defined in Equation \ref{eq:unit_norm}.
\end{enumerate}
For C51, we used the hyperparameters provided by \citet{bellemare2017distributional}. We optimized the hyperparameters for S51 over the same range as used in that paper, and found that a smaller step size ($\alpha = 2.5 \times 10^{-5}$, vs. $2.5 \times 10^{-4}$ for C51) and optimizer epsilon ($\epsilon_{\textsc{opt}} = 3.125 \times 10^{-5}$, vs $3.125 \times 10^{-4}$) performed best. The parameter $\lambda = 10$ was selected from a hyperparameter sweep ($\lambda \in \{0, 0.25, 1, 10, 20, 100\}$); we found the method to perform reasonably the same for a broad range of $\lambda$ values, but note that $\lambda = 0$ yielded worse performance. In both cases, the training epsilon was set to $\epsilon = 0.05$, and lives lost were counted as the end of an episode.

\begin{figure}[htb]
\begin{tabular}{ l | r}
	\textbf{\textsc{games}}	&	\textbf{\textsc{video}} \textbf{\textsc{url}}\\
	\hline
	Asterix	& \url{https://youtu.be/hk4sYkx-VuQ} \\
	Breakout		& \url{https://youtu.be/POWvu9-2m6E} \\
	Pong	& \url{https://youtu.be/f63K_peZ6uE} \\
	Seaquest & \url{https://youtu.be/lbySDvtAmPo} \\
	Space Invaders & \url{https://youtu.be/dMvN9gmAy7E}
\end{tabular}
\caption{Links to videos of the S51 value distributions after training.\label{fig:videos}}
\end{figure}

\end{document}